\def\eqref#1{equation~\ref{#1}}
\def\1{\bm{1}}
\DeclareMathAlphabet{\mathsfit}{\encodingdefault}{\sfdefault}{m}{sl}
\SetMathAlphabet{\mathsfit}{bold}{\encodingdefault}{\sfdefault}{bx}{n}
\newtheorem{theorem}{Theorem}
\newtheorem{proposition}{Proposition}
\newcommand{\xx}{\boldsymbol{x}}
\newcommand{\zz}{\boldsymbol{z}}
\newcommand{\diff}{\mathrm{d}}
\newtheorem*{theorem*}{Theorem}
\newtheorem*{proposition*}{Proposition}
\title{Entropy-based Training Methods for Scalable Neural Implicit Sampler}
\author{%
    Weijian Luo \thanks{School of Mathematical Sciences; Peking University;  \texttt{luoweijian@stu.pku.edu.cn};} \\
    \And
    Boya Zhang  \thanks{Academy for Advanced Interdisciplinary Studies; Peking University; \texttt{zhangboya@pku.edu.cn};} \\
    \And Zhihua Zhang 
    \thanks{School of Mathematical Sciences; Peking University;  \texttt{zhzhang@math.pku.edu.cn}; } 
}
\begin{document}

\maketitle

\begin{abstract}
Efficiently sampling from un-normalized target distributions is a fundamental problem in scientific computing and machine learning. Traditional approaches such as Markov Chain Monte Carlo (MCMC) guarantee asymptotically unbiased samples from such distributions but suffer from computational inefficiency, particularly when dealing with high-dimensional targets, as they require numerous iterations to generate a batch of samples. In this paper, we introduce an efficient and scalable neural implicit sampler that overcomes these limitations. The implicit sampler can generate large batches of samples with low computational costs by leveraging a neural transformation that directly maps easily sampled latent vectors to target samples without the need for iterative procedures. To train the neural implicit samplers, we introduce two novel methods: the KL training method and the Fisher training method. The former method minimizes the Kullback-Leibler divergence, while the latter minimizes the Fisher divergence between the sampler and the target distributions. By employing the two training methods, we effectively optimize the neural implicit samplers to learn and generate from the desired target distribution. To demonstrate the effectiveness, efficiency, and scalability of our proposed samplers, we evaluate them on three sampling benchmarks with different scales. These benchmarks include sampling from 2D targets, Bayesian inference, and sampling from high-dimensional energy-based models (EBMs). Notably, in the experiment involving high-dimensional EBMs, our sampler produces samples that are comparable to those generated by MCMC-based methods while being more than 100 times more efficient, showcasing the efficiency of our neural sampler. Besides the theoretical contributions and strong empirical performances, the proposed neural samplers and corresponding training methods will shed light on further research on developing efficient samplers for various applications beyond the ones explored in this study.
\end{abstract}

\section{Introduction}
Efficiently sampling from un-normalized distributions is a fundamental challenge that arises in various research domains, including Bayesian statistics \citep{revjump}, biology and physics simulations \citep{mcprotein,mcphysics}, as well as generative modeling and machine learning \citep{Xie2016ATO,andrieu2003introduction}. The task at hand involves generating batches of samples from a target distribution defined by a differentiable un-normalized potential function, denoted as $\log q(\xx)$. This problem seeks effective techniques for obtaining samples that accurately represent the underlying target distribution while minimizing computational costs.

Two main classes of methods exist for addressing this challenge. The first class comprises Markov Chain Monte Carlo (MCMC) algorithms \citep{mh, ld, mala, hmc}. MCMC methods involve the design of Markov Chains with stationary distributions that match the target distribution. By simulating these Markov Chains, it becomes possible to transition a batch of initial samples towards samples approximately distributed following the target distribution. While MCMC methods offer asymptotically unbiased samples, they often suffer from computational inefficiency, particularly when dealing with high-dimensional target distributions. This inefficiency stems from the need to perform a significant number of iterations to generate a single batch of samples. 

The second class of methods, known as learning to sample (L2S) models \citep{Lvy2017GeneralizingHM,nss,snf,Langosco2021NeuralVG,arbel2021annealed,Zhang2021PathIS,matthews2022continual,vargas2022denoising,lahlou2023theory}, focuses on training neural components, such as generative models, to generate samples following the target distribution. L2S models aim to improve the quality and efficiency of sample generation compared to traditional training-free algorithms like MCMCs. Among the L2S models, the neural implicit sampler stands out due to its remarkable efficiency. The neural implicit sampler is an L2S model that incorporates a neural transformation $\xx=g_\theta(\zz)$, where $\zz$ is an easy-to-sample latent vector from a known distribution $p_z(\zz)$. This transformation allows the sampler to generate samples directly from the target distribution without the need for iterations or gradient operations, which are commonly required in MCMCs. Such a sampler offers significant advantages in terms of efficiency and scalability, particularly when dealing with high-dimensional distributions such as those found in image spaces. However, training a neural implicit sampler poses technical challenges due to the intractability of sampler distributions that are caused by complex neural architectures of $g_\theta$. Achieving effective training requires careful consideration of various factors and the development of appropriate training strategies.

In this paper, we introduce two innovative training approaches, namely the KL training methods and the Fisher training methods, for training implicit samplers. We theoretically demonstrate that our proposed KL and Fisher training methods are equivalent to minimizing the Kullback-Leibler divergence and the Fisher divergence \citep{Johnson2004InformationTA}, respectively, between the sampler and the target distribution. To evaluate the effectiveness, efficiency, and scalability of our proposed training approach and sampler, we conduct experiments on three different sampling benchmarks that span various scales. These benchmarks include sampling from 2D targets with low dimensions, Bayesian inference with moderate dimensions, and sampling from high-dimensional energy-based models (EBMs) trained on the MNIST dataset. The results of experiments consistently demonstrate the superior performance of our proposed method and the quality of the generated samples. Notably, in the EBM experiment, our sampler trained with the KL training method achieves sample quality comparable to that of the EBM model while being more than 100 times more efficient than traditional MCMC methods. These findings highlight the effectiveness, efficiency, and scalability of the proposed approach across different sampling scenarios.

\section{Backgrounds}
\subsection{Generative Models for Sampling}\label{sec:gm_stein}
Generative models have emerged as powerful tools in various domains, demonstrating their ability to produce diverse, high-quality samples. They have been successfully applied in tasks such as text-to-image generation \citep{brock2018large,karras2019style,karras2020analyzing,karras2021alias,nichol2021improved,dhariwal2021diffusion,ramesh2022hierarchical,saharia2022photorealistic,rombach2022high}, audio generation\citep{huang2023make}, video and 3D creation\citep{clark2019adversarial,ho2022imagen,molad2023dreamix,poole2022dreamfusion}, and even molecule design \citep{Nichol2021GLIDETP,Ho2022VideoDM}. Recently, there has been a growing interest in leveraging generative models for sampling from target distributions.

Specifically, consider the problem that we have access to an un-normalized target distribution $q(\xx)$, or its logarithm $\log q(\xx)$. Our objective is to train generative models that can effectively generate samples following the target distribution. By learning the underlying structure of the target distribution, these generative models can generate samples that capture the characteristics of the target distribution, facilitating tasks such as posterior inference in Bayesian statistics.

Three primary classes of generative models have been extensively studied for sampling tasks. The first class comprises normalizing flows (NFs) \citep{rezende2015variational}, while the second class consists of diffusion models (DMs) \citep{song2020score}. NFs employ invertible neural transformations to map Gaussian latent vectors $\zz$ to obtain samples $\xx$. The strict invertibility of NF transformations enables the availability of likelihood values for generated samples, which are differentiable with respect to the model's parameters. Training NFs often involves minimizing the KL divergence between the NF and the target distribution \citep{snf}. On the other hand, DMs employ neural score networks to model the marginal score functions of a data-initialized diffusion process. The score functions are learned using techniques related to score-matching \citep{Doucet2022ScoreBasedDM}. DMs have been successfully employed to enhance the sampler performance of the annealed importance sampling algorithm \citep{ais01}, a widely recognized MCMC method for various sampling benchmarks. Despite the successes of NFs and DMs, both models have their limitations. The invertibility of NFs restricts their expressiveness, which can hinder their ability to effectively model high-dimensional targets. Moreover, DMs still require a considerable number of iterations for sample generation, resulting in computational inefficiency.

The third class is the implicit generative model. An implicit generative model (IGM) uses a flexible neural transform $g_\theta(.)$ to push forward easy-to-sample latent vectors $\zz\sim p_z$ to obtain samples $\xx=g_\theta(\zz)$. The main difference between IGMs and NFs is that IGMs' neural transformation is not required to be strictly invertible, which unlocks both the flexibility and the modeling power of deep neural networks. For instance, \citet{nss} proposed to train implicit samplers by minimizing the Stein discrepancy between the sampler and the target distribution. The Stein Discrepancy (SD) \citep{gorham2015measuring} between $p$ and $q$ is defined as 
\begin{align*}
    \mathcal{D}_{SD} \coloneqq \sup_{\bm{f}\in \mathcal{F}}\biggl\{\mathbb{E}_{p} \langle \nabla_{\xx} \log q(\xx), \bm{f}(\xx)\rangle + \sum_{d=1}^D \frac{\partial}{\partial \xx_d} f_d(\xx) \biggl\}.
\end{align*}
The notation $D$ represents the dimension of data space and $f_d(.)$ is the $d$-th component of the vector-valued function $\bm{f}$. The calculation of Stein's discrepancy relies on solving a maximization problem w.r.t. the test function $\bm{f}$. When the function class $\mathcal{F}$ is carefully chosen, the optimal $\bm{f}$ may have an explicit solution or easier formulation. For instance, \citet{nss} found that if $\mathcal{F}$ is taken to be 
$\mathcal{F} = \{\bm{f}\colon \mathbb{E}_{\xx\sim p} \|\bm{f}(\xx)\|_2^2 \leq \delta\}$, the SD is equivalent to a regularized representation
\begin{align*}
    \mathcal{D}_{SD}(p,q) = &\max_{f} \biggl\{  \mathbb{E}_{\xx\sim p} \langle\nabla_{\xx} \log q(\xx),\bm{f}(\xx)\rangle + \sum_{d=1}^D \frac{\partial}{\partial \xx_d} f_d(\xx)   - \lambda \|\bm{f}(\xx)\|_2^2 \biggl\}.
\end{align*}
They used two neural networks: $g_\theta$ to parametrize an implicit sampler and $\bm{f}_\eta$ to parametrize the test function. Let $p_\theta(\xx)$ denote the implicit sampler distribution induced by $\xx= g_\theta(\zz)$ with $\zz\sim p_z(\zz)$. They solved a bi-level mini-max problem on parameter pair $(\theta, \eta)$ to obtain a sampler that tries to minimize the SD with 
\begin{align}\label{eqn:fisher_stein_obj}
\begin{aligned}
        \min_\theta \max_\eta & L(\theta,\eta)\\ 
        L(\theta,\eta) =& \mathbb{E}_{p_\theta} \langle\nabla_{\xx} \log q(\xx),\bm{f}_\eta(\xx)\rangle + \sum_{d=1}^D \frac{\partial}{\partial \xx_d} f_d(\xx) - \lambda \|\bm{f}_\eta(\xx)\|_2^2 .
\end{aligned}
\end{align}

\citet{nss} opened the door to training implicit samplers by minimizing divergences which are implemented with a two-networks bi-level optimization problem. Since the general Stein's discrepancy does not have explicit formulas, in our paper we study training approaches of an implicit sampler by minimizing the KL divergence and the Fisher divergences. For the implementation of minimization of Fisher divergence, some backgrounds of score function estimation are needed as we put in Section \ref{sec:score_estimate}. 

\subsection{Score Function Estimation}\label{sec:score_estimate}
Since the implicit sampler does not have an explicit log-likelihood function or score function, training it with score-based divergence requires inevitably estimating the score function (or equivalent component). Score matching \citep{hyvarinen2005estimation} and its variants provided practical approaches to estimating score functions. Assume one only has available samples $\xx\sim p$ and wants to use a parametric approximated distribution $q_\phi(\xx)$ to approximate $p$. Such an approximation can be made by minimizing the Fisher Divergence between $p$ and $q_\phi$ with the definition
\begin{align*}
    \mathcal{D}_{FD}(p,q_\phi) \coloneqq &\mathbb{E}_{\xx\sim p} \biggl\{ \|\nabla_{\xx} \log p(\xx)\|_2^2 + \|\nabla_{\xx} \log q_\phi(\xx)\|_2^2 - 2\langle\nabla_{\xx} \log p(\xx), \nabla_{\xx} \log q_\phi(\xx)\rangle \biggl\}.
\end{align*}
Under certain conditions, the equality 
\begin{align*}
    \mathbb{E}_{\xx\sim p} \langle\nabla_{\xx} \log p(\xx), \nabla_{\xx} \log q_\phi(\xx)\rangle = -\mathbb{E}_{\xx\sim p} \Delta_{\xx} \log q_\phi(\xx)
\end{align*}
holds (usually referred to as Stein's Identity\citep{stein1981estimation}). Here $\Delta_{\xx} \log q_\phi(\xx) = \sum_i \frac{\partial^2}{\partial {\xx}_i^2}\log q_\phi(\xx)$ denotes the Laplacian operator applied on $\log q_\phi(\xx)$. Combining this equality and noting that the first term of FD $\mathbb{E}_{\xx\sim p} \|\nabla_{\xx} \log p(\xx)\|_2^2$ does not rely on the parameter $\phi$, minimizing $\mathcal{D}_{FD}(p,q_\phi)$ is equivalent to minimizing the following objective 
\[
\mathcal{L}_{SM}(\phi) = \mathbb{E}_{\xx\sim p} \biggl\{ \|\nabla_{\xx} \log q_\phi(\xx)\|_2^2 + 2 \Delta_{\xx} \log q_\phi(\xx)  \biggl\}.
\]
This objective can be estimated only through samples from $p$,  thus is tractable when $q_\phi$ is well-defined. Moreover, one only needs to define a score network $\bm{s}_\phi(\xx)\colon \mathbb{R}^D \to \mathbb{R}^D$ instead of a density model to represent the parametric score function. This technique was named after Score Matching. Other variants of score matching were also studied \citep{ssm,vincent2011connection, fdsm, meng2020autoregressive, lu2022maximum,bao2020bi}. Score Matching related techniques have been widely used in the domain of generative modeling, so we use them to estimate the score function of the sampler's distribution.

\section{Training Approaches of Neural Implicit Samplers}
The neural implicit sampler has the advantage of sampling efficiency and scalability. However, the training approach for such samplers still requires to explore. In this section, we start by introducing the proposed KL and Fisher training methods for implicit samplers. 

\subsection{Training Neural Implicit Sampler by Minimizing the KL Divergence}\label{sec:kl_train}
Let $g_\theta(\cdot)\colon \mathbb{R}^{D_Z}\to \mathbb{R}^{D_X}$ be an implicit sampler (i.e., a neural transform), $p_z$ the latent distribution,  $p_\theta$ the sampler induced distribution $\xx=g_\theta(\zz)$, and $q(\xx)$ the un-normalized target. Our goal in this section is to train the implicit sampler by minimizing the KL divergence between the sampler distribution and the un-normalized target. For a neural implicit sampler, we can efficiently sample from it but the density $p_\theta$ has no explicit expression. Our goal is to minimize the KL divergence between $p_\theta$ and target $q$ in order to train the sampler $g_\theta$. The KL divergence between $p_\theta$ and $q$ is defined as 
\begin{align}\label{eqn:igm_kl}
\begin{aligned}
    \mathcal{D}^{(KL)}(p_\theta, q) \coloneqq & \mathbb{E}_{\xx\sim p_\theta} \big[\log p_\theta(\xx) - \log q(\xx) \big]
    = \mathbb{E}_{\zz\sim p_z} \big[\log p_\theta(g_\theta(\zz)) - \log q(g_\theta(\zz)) \big].
\end{aligned}
\end{align}
In order to update the sampler $g_\theta$, we need to calculate the gradient of $\theta$ for the divergence \eqref{eqn:igm_kl}. So we take the gradient of $\theta$ with respect to the KL divergence \eqref{eqn:igm_kl} and obtain
\begin{align*}
    &\frac{\partial}{\partial \theta} \mathcal{D}^{(KL)}(p_\theta, q)\\
    &= \mathbb{E}_{\zz\sim p_z} \big[ \nabla_{\xx} \log p_\theta(g_\theta(\zz)) - \nabla_{\xx} \log q(g_\theta(\zz))\big]\frac{\partial g_\theta(\zz)}{\partial\theta} + \mathbb{E}_{\zz\sim p_z}\big[\frac{\partial \log p_\theta(\xx)}{\partial\theta}|_{\xx=g_\theta(\zz)} \big].
\end{align*}
The second term vanishes under loose conditions (we put detailed discussions in the Appendix \ref{app:kl_train})
\begin{align*}
    \mathbb{E}_{\zz\sim p_z}\big[\frac{\partial \log p_\theta(\xx)}{\partial\theta}|_{\xx=g_\theta(\zz)} \big] &= \mathbb{E}_{\xx\sim p_\theta}\big[\frac{\partial\log p_\theta(\xx)}{\partial\theta} \big]
    = \int \frac{\partial p_\theta(\xx)}{\partial\theta}d\xx \\
    & = \frac{\partial}{\partial\theta} \int p_\theta(\xx)d\xx
    = \frac{\partial}{\partial\theta} 1 = \mathbf{0}.
\end{align*}
So the gradient of the KL term only remains the first term and the gradient can be calculated with 
\begin{align}\label{eqn:igm_kl2}
\begin{aligned}
    \operatorname{Grad}^{(KL)}(\theta) \coloneqq \mathbb{E}_{\zz\sim p_z} \bigg[ \big[ \bm{s}_p(g_\theta(\zz)) - \bm{s}_q(g_\theta(\zz))\big] \frac{\partial g_\theta(\zz)}{\partial\theta} \bigg],
\end{aligned}
\end{align}
where $\bm{s}_q \coloneqq \nabla_{\xx} \log q(\xx)$ is the score function of target distribution so we have the explicit expression. The $\bm{s}_p \coloneqq \nabla_{\xx} \log p_\theta(\xx)$ is the score function of the implicit sampler but we do not have the explicit expression. However, thanks to advanced techniques of neural score function estimation as we introduced in \ref{sec:score_estimate}, we can use another score neural network to estimate the score function $\bm{s}_p(\xx) \approx \bm{s}_\phi(\xx)$ with samples consistently drawn from $p_\theta$. With the estimated $\bm{s}_p(\xx)$, the formula \eqref{eqn:igm_kl2} provides a practical method for updating the implicit sampler that is shown to minimize the KL divergence. 

More precisely, we can alternate between a score-estimation phase and a KL minimization phase in order to minimize the KL divergence between the sampler and the target distribution. The former phase uses score estimation techniques to train a score network $\bm{s}_\phi$ to approximate the sampler's score function with samples consistently generated from the implicit sampler. In the latter phase, we can minimize the KL divergence with gradient-based optimization algorithms according to gradient formula \eqref{eqn:igm_kl2}. By alternating the optimization of a score network $\bm{s}_\phi$ and implicit sampler $g_\theta$, one can eventually train an implicit sampler that can approximately generate samples that are distributed according to the target distribution. We name such method the \emph{KL training method}. 

\subsection{Training Neural Implicit Sampler by Minimizing Fisher Divergence}\label{sec:3_1}
In the previous section, we proposed a KL training method for implicit samplers. In this section, we proposed an alternative training method motivated by minimizing the Fisher divergence instead of the KL. We use the same notation as the section \ref{sec:kl_train}. Recall the definition of the Fisher divergence, 
\begin{align}\label{eqn:igm_fd1}
\begin{aligned}
        \mathcal{D}^{(F)}(p_\theta, q) &\coloneqq \mathbb{E}_{p_\theta}\frac{1}{2}\|\nabla_{\xx} \log p_\theta(\xx) - \nabla_{\xx} \log q(\xx)\|^2_2 = \mathbb{E}_{\zz\sim p_z} \frac{1}{2}\|\bm{s}_\theta(g_\theta(\zz)) - \bm{s}_q(g_\theta(\zz))\|^2_2.
\end{aligned}
\end{align}
The score functions $\bm{s}_\theta$ and $\bm{s}_q$ remain the same meaning as in the \eqref{eqn:igm_kl2}. For an implicit sampler, we do not know the explicit expression of $\bm{s}_\theta$. In order to minimize the Fisher divergence, we take the $\theta$ gradient of \eqref{eqn:igm_fd1} and obtain
\begin{align}\label{eqn:igm_fd_grad1}
\begin{aligned}
    \frac{\partial}{\partial\theta} \mathcal{D}^{(F)}(p_\theta, q)
    & = \mathbb{E}_{\xx=g_\theta(\zz),\atop \zz\sim p_z} \big[\bm{s}_\theta(\xx)- \bm{s}_q(\xx)\big] \big[ \frac{\partial \bm{s}_\theta(\xx)}{\partial \xx}-\frac{\partial \bm{s}_q(\xx)}{\partial \xx} \big] \frac{\partial \xx}{\partial\theta} \\
    & + \mathbb{E}_{\zz\sim p_z} \big[ \bm{s}_\theta(g_\theta(\zz))-\bm{s}_q(g_\theta(\zz)) \big] \frac{\partial \bm{s}_\theta(\xx)}{\partial\theta}|_{\xx=g_\theta(\zz)}\\
    &= \operatorname{Grad}^{(F,1)}(\theta) + \operatorname{Grad}^{(F,2)}(\theta).
\end{aligned}
\end{align}
We use $\bm{s}_{\theta^\dagger}$ to represent a function that does not differentiate with respect to parameter $\theta$, so the first term gradient \eqref{eqn:igm_fd_grad1} is equivalent to taking the gradient of an equivalent objective 
\begin{align}\label{eqn:igm_fd_grad1_term1}
        \mathcal{L}^{(F,1)}(\theta) = \mathbb{E}_{\zz\sim p_z} \frac{1}{2}\|\bm{s}_{\theta^\dagger}(g_\theta(\zz)) - \bm{s}_q(g_\theta(\zz)) \|_2^2.
\end{align}
As for the second term of \eqref{eqn:igm_fd_grad1}, we notice that under regularity conditions
\begin{align*}
    \frac{\partial \bm{s}_\theta(\xx)}{\partial\theta} = \frac{\partial}{\partial\theta}\frac{\partial}{\partial \xx} \log p_\theta(\xx) = \frac{\partial}{\partial \xx} \frac{\partial}{\partial \theta} \log p_\theta(\xx).
\end{align*}
So the second term of \eqref{eqn:igm_fd_grad1} equals to 
\begin{small}
\begin{align}
\begin{aligned}
    \operatorname{Grad}^{(F,2)}(\theta) =& \mathbb{E}_{\xx\sim p_\theta} \big[\bm{s}_\theta(\xx)- \bm{s}_q(\xx) \big]\frac{\partial}{\partial\theta}\frac{\partial}{\partial \xx} \log p_\theta(\xx)
    = \int \big[\bm{s}_\theta(\xx)- \bm{s}_q(\xx) \big]p_\theta(\xx) \frac{\partial}{\partial \xx} \frac{\partial}{\partial \theta} \log p_\theta(\xx) d\xx\\
    =& -\frac{\partial}{\partial\theta} \mathbb{E}_{p_\theta} \bigg[ \bm{s}_{\theta^\dagger}^T(\xx)[\bm{s}_{\theta^\dagger}(\xx) - \bm{s}_q(\xx)] + \nabla_{\xx} [\bm{s}_{\theta^\dagger}(\xx) - \bm{s}_q(\xx)] \bigg].
\end{aligned}
\end{align}
\end{small}
We put detailed derivation and required conditions in Appendix \ref{app:fisher_train}. With the expression of the $\operatorname{Grad}^{(F,2)}(\theta)$, minimizing the second part of $\mathcal{D}^{(F)}(\theta)$ is equivalent to minimizing an equivalent objective
\begin{small}
\begin{align}\label{eqn:igm_fd_grad1_term2}
\begin{aligned}
        &\mathcal{L}^{(F,2)}(\theta) = - \mathbb{E}_{\zz\sim p_z} \bigg[ \bm{s}_{\theta^\dagger}^T(g_\theta(\zz))[\bm{s}_{\theta^\dagger}(g_\theta(\zz)) - \bm{s}_q(g_\theta(\zz))] + \nabla_{\xx} [\bm{s}_{\theta^\dagger}(g_\theta(\zz)) - \bm{s}_q(g_\theta(\zz))] \bigg].
\end{aligned}
\end{align}
\end{small}

Combining equivalent loss function \eqref{eqn:igm_fd_grad1_term1} and \eqref{eqn:igm_fd_grad1_term2}, we obtain a final objective function for training implicit sampler that minimizes the Fisher divergence
\begin{align}\label{eqn:igm_fd_final}
\begin{aligned}
     \mathcal{L}^{(F)}(\theta) &= \mathcal{L}^{(F,1)}(\theta) + \mathcal{L}^{(F,2)}(\theta)  \\
     &=\mathbb{E}_{\xx=g_\theta(\zz)\atop \zz\sim p_z} \frac{1}{2}\bigg[ \|\bm{s}_q(\xx)\|_2^2 - \|\bm{s}_{\theta^\dagger}(\xx)\|_2^2 + 2\nabla_{\xx} \big[ \bm{s}_q(\xx) - \bm{s}_{\theta^\dagger}(\xx) \big] \bigg].
\end{aligned}
\end{align}
We formally define the gradient operator
\begin{align}\label{eqn:fisher_grad}
    \operatorname{Grad}^{(F)}(\theta) \coloneqq \frac{\partial}{\partial\theta} \mathcal{L}^{(F)}(\theta).
\end{align}
Similar to the KL divergence case as discussed in Section \ref{sec:kl_train}, our overall training approach of such a Fisher divergence minimization consists of two phases: the score-estimation phase, and the Fisher divergence minimization phase. The score-estimation phase is the same as the one used for KL training, while the Fisher divergence minimization updates the implicit sampler by minimizing objective \eqref{eqn:igm_fd_final}. Since the introduced method aims to minimize the Fisher divergence, we name our training the \emph{Fisher training} for an implicit sampler. 

\paragraph{Architecture choice of score network.} Notice that for Fisher training, the objective \eqref{eqn:igm_fd_final} needs the calculation of the gradient of the scoring network $\nabla_{\xx} \bm{s}_\phi(\xx)$, so it requires the scoring network to be point-wisely differentiable. Commonly used activation functions such as ReLU or LeakyReLU do not satisfy the differentiability property. In our further experiments, we use differentiable activation functions for Fisher training. 

We formally give a unified Algorithm for the training of implicit samplers with KL, Fisher, and Combine training in Algorithm \ref{alg:train_sampler}. We use the standard score-matching objective as an example of score estimation phases, other score estimation methods are also suitable. The notation $\bm{s}_\phi^{(d)}$ and $\xx_d$ represent the $d$-th component of the score network $\bm{s}_\phi(.)$ and the data $\xx$, and $D$ means the data dimension. The notation $\operatorname{sg}(.)$ represents stopping the parameter dependence of the input. 

\begin{algorithm}[]
\SetAlgoLined
\KwIn{un-normalized target $\log q(\xx)$, latent distribution $p_z(\zz)$, implicit sampler $g_\theta(.)$, score network $\bm{s}_\phi(.)$, mini-batch size B, max iteration M.}
Randomly initialize $(\theta^{(0)}, \phi^{(0)})$.\\
\For{$t$ in 0:M}{
\emph{// update score network parameter}\\
Get mini-batch without parameter dependence\\
$x_i = \operatorname{sg}[g_{\theta^{(t)}}(\zz_i)], \zz_i\sim p_z(\zz), i=1,..,B$.\\
Calculate score matching or related objective: \\
$\mathcal{L}(\phi) = \frac{1}{B}\sum_{i=1}^B \bigg[ \|\bm{s}_\phi(\xx_i) \|_2^2+ 2\sum_{d=1}^D \frac{\partial \bm{s}_\phi^{(d)}(\xx_i)}{\partial \xx_d} \bigg].$\\
Minimize $\mathcal{L}_{SM}(\phi)$ to get $\phi^{(t+1)}$.\\
\emph{// update sampler parameter}\\
Get mini-batch samples with parameter-dependecne\\
$\xx_i = g_{\theta^{(t)}}(\zz_i), \zz_i\sim p_z(\zz), i=1,..,B$.\\
Calculate Gradient $\operatorname{grad}$ with formula \ref{eqn:igm_kl2} or \ref{eqn:fisher_grad}.\\
Update parameter $\theta$ with $\operatorname{grad}$ and gradient-based optimization algorithms to get $\theta^{(t+1)}$.}
\Return{$(\theta,\phi)$.}
\caption{Training Algorithm for Implicit Samplers}
\label{alg:train_sampler}
\end{algorithm}

\subsection{Connection to Fisher-Stein's Sampler}
In this section, we provide an analysis of the connection between our proposed Fisher training method and Fisher-Stein's sampler proposed in \citet{nss}. Surprisingly, we discover that the training method used in Fisher-Stein's sampler is implicitly equivalent to our proposed Fisher training. To establish this connection, we adopt the notations introduced in the introduction of Fisher-Stein's sampler in Section \ref{sec:gm_stein}. We summarize our findings in a theorem, and for a comprehensive understanding of the proof and analysis, we provide detailed explanations in Appendix \ref{app:connect}.

\begin{proposition}\label{thm:fsd_equiv}
Estimating the sampler's score function $\bm{s}_\phi(.)$ with score matching is equivalent to the maximization of test function $\bm{f}$ of Fisher-Stein's Discrepancy objective (\ref{eqn:fisher_stein_obj}). More specially, the optimal score estimation $S^*$ and Fisher-Stein's optimal test function $\bm{f}^*$ satisfy 
\[
\bm{f}^*(\xx) = \frac{1}{2\lambda}\big[\nabla_{\xx} \log q(\xx) - \bm{s}^*(\xx) \big].
\]
\end{proposition}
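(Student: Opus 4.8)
The plan is to compute the optimal test function $\bm{f}^*$ for the regularized Stein discrepancy objective \eqref{eqn:fisher_stein_obj} by a pointwise variational argument, and then separately compute the score-matching optimum, and observe that they match under the stated affine relation. First I would take the functional $L(\theta,\eta)$ with $\theta$ fixed (so $p_\theta$ is a fixed distribution $p$) and treat it as a functional of $\bm{f}$ over an unconstrained function class. The key move is integration by parts: the divergence term $\mathbb{E}_{\xx\sim p}\sum_d \tfrac{\partial}{\partial \xx_d} f_d(\xx) = \int p(\xx)\,\nabla_{\xx}\cdot \bm{f}(\xx)\,d\xx$ can be rewritten, assuming boundary terms vanish, as $-\int \nabla_{\xx} p(\xx)\cdot \bm{f}(\xx)\,d\xx = -\mathbb{E}_{\xx\sim p}\langle \nabla_{\xx}\log p(\xx), \bm{f}(\xx)\rangle = -\mathbb{E}_{\xx\sim p}\langle \bm{s}_p(\xx), \bm{f}(\xx)\rangle$. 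So $L$ becomes $\mathbb{E}_{\xx\sim p}\big[\langle \bm{s}_q(\xx) - \bm{s}_p(\xx), \bm{f}(\xx)\rangle - \lambda\|\bm{f}(\xx)\|_2^2\big]$, which is a pointwise-separable concave quadratic in $\bm{f}(\xx)$.

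Next I would maximize pointwise: for each fixed $\xx$, the expression $\langle \bm{v}(\xx), \bm{f}(\xx)\rangle - \lambda\|\bm{f}(\xx)\|_2^2$ with $\bm{v}(\xx) := \bm{s}_q(\xx) - \bm{s}_p(\xx)$ is maximized at $\bm{f}^*(\xx) = \tfrac{1}{2\lambda}\bm{v}(\xx) = \tfrac{1}{2\lambda}\big[\nabla_{\xx}\log q(\xx) - \bm{s}_p(\xx)\big]$. Then I would invoke the fact, established earlier in Section~\ref{sec:score_estimate}, that score matching applied to samples from $p$ produces (at its optimum over a sufficiently expressive class) exactly the true score $\bm{s}^*(\xx) = \bm{s}_p(\xx) = \nabla_{\xx}\log p(\xx)$. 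Substituting $\bm{s}^*$ for $\bm{s}_p$ in the formula for $\bm{f}^*$ gives precisely the claimed identity $\bm{f}^*(\xx) = \tfrac{1}{2\lambda}[\nabla_{\xx}\log q(\xx) - \bm{s}^*(\xx)]$, and the "equivalence" statement follows because the two optimization problems — maximizing over $\bm{f}$ in \eqref{eqn:fisher_stein_obj} and minimizing the score-matching loss $\mathcal{L}_{SM}$ over the score network — have optima that are in affine bijection with each other, both being parametrizations of the residual $\bm{s}_q - \bm{s}_p$.

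I expect the main obstacle to be making the integration-by-parts step rigorous: one needs $p$ and $\bm{f}$ to decay fast enough that the boundary term $\oint p\,\bm{f}\cdot d\bm{S}$ vanishes, and one needs the function class $\mathcal{F}$ to be rich enough that the unconstrained pointwise maximizer $\bm{f}^*$ actually lies in it (or is approached by a sequence in it) — this is exactly the role of the soft $L^2$ penalty replacing the hard constraint $\mathbb{E}_p\|\bm{f}\|_2^2 \le \delta$, as noted in the discussion around \eqref{eqn:fisher_stein_obj}. A secondary subtlety is that the original objective in \citet{nss} has the penalty written as $\lambda\|\bm{f}(\xx)\|_2^2$ inside an expectation, so I should be careful to carry the $\mathbb{E}_{\xx\sim p}$ through consistently and confirm the penalty is indeed $\lambda\,\mathbb{E}_p\|\bm{f}\|_2^2$ rather than a global norm; granting that, the pointwise maximization is immediate. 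I would close by remarking that this identification also explains, at the level of gradients in $\theta$, why the Fisher-Stein update coincides with the Fisher-training update of Section~\ref{sec:3_1}, deferring the detailed gradient bookkeeping to the Appendix.
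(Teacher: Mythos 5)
Your proposal is correct and follows essentially the same route as the paper's proof: both derive the optimal test function variationally (the paper applies the Euler--Lagrange equation to the Lagrangian density containing $\nabla_{\xx}\cdot\bm{f}$, whereas you integrate by parts first via Stein's identity and then maximize the resulting concave quadratic pointwise --- the same integration by parts performed in a different order, requiring the same vanishing boundary term), arriving at $\bm{f}^*(\xx)=\frac{1}{2\lambda}\big[\nabla_{\xx}\log q(\xx)-\nabla_{\xx}\log p(\xx)\big]$, and then substitute the score-matching optimum $S^*(\xx)=\nabla_{\xx}\log p(\xx)$. Your explicit attention to the boundary decay condition and to whether the unconstrained pointwise maximizer lies in the class $\mathcal{F}$ makes rigorous two points the paper leaves implicit.
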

With Proposition \ref{thm:fsd_equiv}, we can prove a Theorem that states that the Fisher-Stein training coincides with our proposed Fisher training.
\begin{theorem}\label{thm:partial_grad}
Assume $\bm{f}^*$ is the optimal test function that maximizes Fisher-Stein's objective \ref{eqn:fisher_stein_obj}. Then the minimization part of objective \ref{eqn:fisher_stein_obj} for sampler $G_\theta$ with the gradient-based algorithm is equivalent to the Fisher training, i.e. the minimization objective of \ref{eqn:fisher_stein_obj} shares the same gradient with $\operatorname{Grad}^{(F)}(\theta)$.
\end{theorem}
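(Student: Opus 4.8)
The plan is to substitute the explicit form of the optimal test function $\bm{f}^*$ from Proposition~\ref{thm:fsd_equiv} into the minimization objective $L(\theta,\eta)$ of \eqref{eqn:fisher_stein_obj}, and then show — after discarding $\theta$-independent pieces and using the regularity/integration-by-parts facts already established in Section~\ref{sec:3_1} — that the resulting gradient in $\theta$ coincides with $\operatorname{Grad}^{(F)}(\theta)$ as defined in \eqref{eqn:fisher_grad}. The key point to exploit is that when $\bm{f}^*$ is optimal, it is a function of $\theta$ only through $\bm{s}^*$ (the optimal score estimate, which itself tracks $\bm{s}_\theta$ via the score-matching phase), so in the outer minimization we treat $\bm{f}^*$ as $\bm{f}_{\theta^\dagger}$ — a non-differentiated function — exactly mirroring the $\bm{s}_{\theta^\dagger}$ convention used to derive \eqref{eqn:igm_fd_grad1_term1} and \eqref{eqn:igm_fd_grad1_term2}.

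First I would write $\bm{f}^*(\xx) = \tfrac{1}{2\lambda}[\bm{s}_q(\xx) - \bm{s}^*(\xx)]$ and plug it into $L(\theta,\eta^*)$. The term $\mathbb{E}_{p_\theta}\langle \bm{s}_q, \bm{f}^*\rangle$ becomes $\tfrac{1}{2\lambda}\mathbb{E}_{p_\theta}\langle \bm{s}_q, \bm{s}_q - \bm{s}^*\rangle$; the regularizer $-\lambda\|\bm{f}^*\|^2$ becomes $-\tfrac{1}{4\lambda}\mathbb{E}_{p_\theta}\|\bm{s}_q - \bm{s}^*\|^2$; and the divergence term $\sum_d \partial_{\xx_d} f^*_d$ becomes $\tfrac{1}{2\lambda}\mathbb{E}_{p_\theta}\nabla_{\xx}\cdot[\bm{s}_q - \bm{s}^*]$. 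Next I would expand the quadratic term and collect: the cross term $\langle \bm{s}_q, \bm{s}^*\rangle$ and the $\|\bm{s}_q\|^2$ term partially cancel, leaving (up to the overall $\tfrac{1}{2\lambda}$ constant, which does not affect the gradient direction and can be absorbed) an expression of the form $\mathbb{E}_{p_\theta}\bigl[\tfrac12\|\bm{s}_q\|^2 - \tfrac12\|\bm{s}^*\|^2 + \nabla_{\xx}\cdot(\bm{s}_q - \bm{s}^*)\bigr]$ — which, identifying $\bm{s}^*$ with $\bm{s}_{\theta^\dagger}$, is precisely the integrand of $\mathcal{L}^{(F)}(\theta)$ in \eqref{eqn:igm_fd_final}. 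The remaining step is to confirm that taking $\partial/\partial\theta$ through the expectation $\mathbb{E}_{p_\theta}[\cdot] = \mathbb{E}_{\zz\sim p_z}[\cdot|_{\xx=g_\theta(\zz)}]$ yields the same gradient: here the reparametrization path $\xx = g_\theta(\zz)$ handles the $\theta$-dependence of the sample location, and the $\theta$-dependence entering through $\bm{s}^* = \bm{s}_\theta$ is exactly the piece that Section~\ref{sec:3_1}'s $\operatorname{Grad}^{(F,2)}$ derivation showed contributes the $\nabla_{\xx}\cdot\bm{s}_{\theta^\dagger}$ and $\|\bm{s}_{\theta^\dagger}\|^2$ correction terms via Stein's identity.

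The main obstacle I anticipate is bookkeeping the two distinct sources of $\theta$-dependence cleanly: the one through the sampled point $g_\theta(\zz)$ and the one through the optimal score/test function itself. One must argue that at the optimum $\bm{f}^*$ (equivalently $\bm{s}^*$), the envelope-theorem-type argument applies — i.e., the derivative of $L(\theta, \eta^*(\theta))$ with respect to $\theta$ equals the partial derivative holding $\eta^*$ fixed — which is what justifies the $\theta^\dagger$ treatment and which relies on $\eta^*$ being a stationary point of the inner maximization. Once that is in place, matching the two gradient expressions is a routine comparison of the integrand of \eqref{eqn:igm_fd_final} against the reduced form of $L(\theta,\eta^*)$, and the constant factor $\tfrac{1}{2\lambda}$ is immaterial since it only rescales the gradient. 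I would also need to invoke the same regularity conditions (vanishing boundary terms, validity of interchanging $\partial_\theta$ and $\partial_{\xx}$, and of differentiating under the integral sign) that the Appendix already assumes for the Section~\ref{sec:3_1} derivation, so no new hypotheses are required.
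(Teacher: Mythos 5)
Your proposal is correct, and the algebra checks out: substituting $\bm{f}^*=\tfrac{1}{2\lambda}(\bm{s}_q-\bm{s}^*)$ into all three terms of $L(\theta,\eta^*)$ and expanding the square does yield $\tfrac{1}{2\lambda}\,\mathbb{E}_{p_\theta}\bigl[\tfrac12\|\bm{s}_q\|^2-\tfrac12\|\bm{s}^*\|^2+\nabla_{\xx}\cdot(\bm{s}_q-\bm{s}^*)\bigr]$, which is exactly $\tfrac{1}{2\lambda}\mathcal{L}^{(F)}(\theta)$ from \eqref{eqn:igm_fd_final} once $\bm{s}^*$ is identified with $\bm{s}_{\theta^\dagger}$. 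Your route differs from the paper's in a genuine, if modest, way. The paper first applies Stein's identity to replace the divergence term $\sum_d\partial_{\xx_d}f^*_d$ by $-\langle\bm{s}_\theta,\bm{f}^*\rangle$, then uses an add-and-subtract of $S^*$ to reduce the objective to $\mathbb{E}_{p_\theta}\bigl[\lambda\|\bm{f}^*\|_2^2+\langle S^*-\bm{s}_\theta,\bm{f}^*\rangle\bigr]$, differentiates, and matches the result term by term against the decomposition $\operatorname{Grad}^{(F,1)}+\operatorname{Grad}^{(F,2)}$ of the raw Fisher-divergence gradient from Section~\ref{sec:3_1}. You instead keep the divergence term as is and establish an identity at the level of \emph{objectives}: $L(\theta,\eta^*)=\tfrac{1}{2\lambda}\mathcal{L}^{(F)}(\theta)$, after which the gradient statement is immediate from the definition \eqref{eqn:fisher_grad}. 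This is shorter, avoids a second invocation of Stein's identity (it is already baked into the derivation of $\mathcal{L}^{(F)}$), and makes the $\tfrac{1}{2\lambda}$ rescaling transparent; the paper's version has the advantage of exhibiting explicitly which piece of the Fisher gradient each surviving term corresponds to. Both arguments rest on the same two ingredients — Proposition~\ref{thm:fsd_equiv} and the justification for not differentiating $\bm{f}^*$ (equivalently $\bm{s}^*$) through $\theta$ — and your framing of the latter as an envelope/stop-gradient argument matching the $\bm{s}_{\theta^\dagger}$ convention is consistent with how the paper handles it (there, ``some gradient terms vanish because $S^*=\bm{s}_\theta$''); indeed, for the theorem as stated the bi-level algorithm holds the test function fixed during the $\theta$ update, so no additional hypothesis is needed.
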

Theorem \ref{thm:partial_grad} states that Fisher-Stein's sampler and its training method are equivalent to our proposed Fisher training. The two methods differ only in the parametrization of the score network (and the test function $\bm{f}$) and the motivation that they are proposed. It is one of our contributions to point out this equivalence. Our analysis reveals the underlying connection between the two training methods, shedding light on the similarities and shared principles between them. By establishing this connection, we provide a deeper understanding of the theoretical foundations of our proposed Fisher training approach and its relationship to existing techniques in the field. 

\section{Experiments}
In previous sections, we have established the KL and Fisher training methods for implicit samplers. Both methods only require access to an un-normalized potential function of the target distribution. In this section, we aim to apply our introduced training methods on three sampling benchmarks whose scales vary across both low (2-dimensional targets) to high (784-dimensions image targets). We compare both the implicit sampler and our training methods to other competitor methods and samplers. All experiments demonstrate the significant advantage of the implicit sampler and the proposed training methods on both the sampling efficiency and the sample quality. 

\subsection{2D Synthetic Target Sampling}
\paragraph{Sample quality on 2D targets.}
In this section, we refer to the open-source implementation of \citet{sharrock2023coin}\footnote{\url{https://github.com/louissharrock/coin-svgd}} and train samplers on eight 2D target distributions. We compare our neural samplers with 3 MCMC baselines: Stein variational gradient descent (SVGD) \citep{liu2016stein}, Langevin dynamics (LD) \citep{welling2011bayesian}, and Hamiltonian Monte Carlo (HMC) \citep{neal2011mcmc}; 1 explicit baseline: coupling normalizing flow \citep{Dinh2016DensityEU}; and 2 implicit samplers: KSD neural sampler (KSD-NS) \citep{nss} and SteinGan \citep{wang2016learning}. All implicit samplers have the same neural architectures, i.e. four layer MLP with 400 hidden units at each layer and ELU activation functions for sampler and score network (if necessary).

The kernelized Stein's discrepancy (KSD) \citep{Liu2016AKS} is a popular metric for evaluating the sample quality of Monte Carlo samplers \citep{Liu2016AKS,gorham2015measuring}. We evaluate the KSD with IMQ kernel (implemented by an open-source package, the sgmcmcjax\footnote{\url{https://github.com/jeremiecoullon/SGMCMCJax}}) on all target distributions as the metric reported in Table \ref{tab:t1}.


\begin{table*}[h]
\caption{KSD Comparison of Samplers. If not especially emphasized, we set the Stepsize=0.01, iterations=500, num particles=500, num chains=20.}
\label{tab:1}
\vspace{-1mm}
\begin{center}
\begin{scriptsize}
\begin{sc}
\begin{tabular}{lccccccc}
\toprule
\textbf{Sampler} & \textbf{Gaussian} & \textbf{MOG2} & \textbf{Rosenbrock} & \textbf{Donut} & \textbf{Funnel} & \textbf{Squiggle} \\
\midrule
MCMC &  &  &  &  &  & \\
\midrule
SVGD(500) & $0.013\pm 0.001$ & $0.044\pm 0.006$ & $0.053\pm 0.002$ & $0.057\pm 0.004$ & $0.052\pm 0.001$ & $0.024\pm 0.002$ \\
LD(500) & $0.107\pm 0.025$ & $0.099\pm 0.008$ & $0.152\pm 0.030$ & $0.107\pm 0.020$ & $0.116\pm 0.029	$ & $0.139\pm 0.030$ \\
HMC(500) & $0.094\pm 0.020$ & $0.106\pm 0.020$ & $0.134\pm 0.034$ & $0.113\pm 0.020$ & $0.135\pm 0.010$ & $0.135\pm 0.033$ \\
\midrule
Neural	 &  &  &  &  &  & \\
\midrule
Coup-Flow & $0.102\pm 0.028$ & $0.158\pm 0.019$ & $0.150\pm 0.026	$ & $0.239\pm 0.013$ & $0.269\pm 0.019$ & $0.130\pm 0.026$ \\
KSD-NS & $0.206\pm 0.043$ & $1.129\pm 0.197$ & $1.531\pm 0.058$ & $0.341\pm 0.039$ & $0.396\pm 0.221$ & $0.462\pm 0.065$ \\
SteinGAN & $0.091\pm 0.013$ & $0.131\pm 0.011$ & $0.121\pm 0.022$ & $0.104\pm 0.013$ & $0.129\pm 0.020$ & $0.124\pm 0.018$ \\
\textbf{Fisher-NS} & $0.095\pm 0.016$ & $0.118\pm 0.013$ & $0.157\pm 0.030$ & $0.179\pm 0.028$ & $7.837\pm 1.614$ & $0.202\pm 0.037$ \\
\textbf{KL-NS} & $0.099\pm 0.015$ & $0.104\pm 0.015$ & $0.123\pm 0.021$ & $0.109\pm 0.015$ & $0.115\pm 0.012$ & $0.118\pm 0.024$ \\
\bottomrule
\end{tabular}
\end{sc}
\end{scriptsize}
\end{center}
\label{tab:t1}
\vspace{-6mm}
\end{table*}

\paragraph{Settings.} For all MCMC samplers, we set the number of iterations to 500, which we find is enough for convergence. For SVGD and LD, we set the sampling step size to 0.01. For the HMC sampler, we optimize and find the step size to be 0.1, and LeapFrog updates to 10 work the best. For Coupling Flow, we follow \citet{Dinh2016DensityEU}, and use 3 invertible blocks, with each block containing a 4-layer MLP with 200 hidden units and Gaussian Error Linear Units (GELU) activations. The total parameters of flow are significantly larger than neural samplers. We find that adding more coupling blocks does not lead to better performance. For all targets, we train each neural sampler with the Adam optimizer with the same learning rate of 2e-5 and default bete. We use the same batch size of 5000 for 10k iterations when training all neural samplers. We evaluate the KSD for every 200 iterations with 500 samples with 20 repeats for each time. We pick the lowest mean KSD among 10k training iterations as our final results. Since our proposed Fisher neural sampler and KL neural sampler require learning the score network, we find that using 5-step updates of the score network for each update of the neural sampler works well.

\paragraph{Performance.}

Table \ref{tab:t1} shows the numerical comparison between the mentioned samplers. The SVGD performs significantly the best among all samplers. However, the SVGD has a more heavy computational cost when the number of particles grows because its update requires matrix computation for a large matrix. The LD and HMC perform almost the same. The KL-NS performs the best across almost all targets, slightly better than LD and HMC on each target. The SteinGAN performs second and is closely comparable to KL-NS. In theory, both the KL-NS and the SteinGAN aim to minimize the KL divergence between the sampler and the target distribution in different ways, so we are not surprised by their similar performances. The Coupling Flow performs overall third, but it fails to correctly capture the Rosenbrock target. We believe more powerful flows, such as stochastic variants or flows with more complex blocks will lead to better performance, but these enhancements will inevitably bring in more computational complexity. The Fisher-NS performs the fourth, with 1 failure case on the Funnel target. We find that the KSD-NS is hard to tune in practice, which has two failure cases. Besides, the KSD-NS has a relatively high computational cost because it requires differentiation through the empirical KSD, which needs large matrix computation when the batch size is large. Overall, the one-shot KL-NS has shown strong performance, outperforming LD and HMC with multiple iterations. We also visualize the sample results on five distributions with hard-to-sample characteristics such as multi-modality and periodicity, as shown in Figure \ref{fig:five_samples}.

\paragraph{Comparison of Computational Costs.} To demonstrate the significant advantage of our proposed implicit samplers over other samplers, we evaluate the efficiency of each sampler under the same environment to measure the wall-clock inference time. In below Table \ref{tab:comp_cost}, we summarize the computational costs (wall-clock time) of SVGD, HMC, and LD together with our KL-NS neural sampler to compare the computational costs of each method. The eight targets are analytic targets while the EBM is a neural target. Each sampler can generate samples with comparable qualities (i.e. their KSD values are comparable). Table \ref{tab:comp_cost} records the wall-clock inference time (seconds) for each sampler when generating 1k samples.


\begin{table*}[h]
\caption{Inference Time Comparison of MCMC Samplers and Neural Samplers (seconds). The 2D experiment is conducted on an 8-CPU cluster with PyTorch of 1.8.1, while the EBM experiment is on 1 Nvidia Titan RTX GPU with PyTorch 1.8.1. Unless especially emphasizing, we set the stepsize=0.01, iterations=500, num particles=1000, num repeats=100.}
\vskip 0.05in
\begin{center}
\begin{scriptsize}
\begin{sc}
\begin{tabular}{lcccccc}
\toprule
\textbf{Sampler} & \textbf{Gaussian} & \textbf{MOG2} & \textbf{Rosenbrock} & \textbf{Donut} & \textbf{Funnel} &  \\
\midrule
SVGD(500) & $26.4224\pm 0.6000$ & $26.7897\pm 0.5712$ & $26.1088\pm 0.4666$ & $26.3632\pm 0.4290$ & $26.0555\pm 0.4232$ &  \\
LD(500) & $0.1035\pm 0.0003$ & $0.2168\pm 0.0008$ & $0.1284\pm 0.0002$ & $0.1100\pm 0.0003$ & $0.1339\pm 0.0032$ &  \\
HMC(500) & $0.3438\pm 0.0017$ & $1.7125\pm 0.0154$ & $0.6725\pm 0.0044$ & $0.3854\pm 0.0015$ & $0.6837\pm 0.0018$ &  \\
KL-NS & $\mathbf{0.0014}\pm 0.0000$ & $\mathbf{0.0014}\pm 0.0000$ & $\mathbf{0.0014}\pm 0.0000$ & $\mathbf{0.0014}\pm 0.0000$ & $\mathbf{0.0014}\pm 0.0000$ &  \\
\bottomrule
\end{tabular}
\end{sc}
\end{scriptsize}
\end{center}
\label{tab:comp_cost}
\vskip -0.1in
\end{table*}
The LD and HMC with 500 iterations have comparable (slightly worse) performance than the Neural Sampler, however, their wall-clock time for obtaining 1k samples is significantly larger than Neural samplers. Let's take the simple mixture-of-2-gaussian (MOG2) target (with analytic potential and scores) as an example, the neural sampler is 1.7125/0.0014=1223 times faster than HMC and 0.2168/0.0014=154 times faster than LD. If one wants to keep the same inference time for LD and HMC to be comparable with the neural sampler, the LD will only have 500/154=3.2 iterations, while the HMC will only have 500/1223=0.4 iterations. It is impossible to obtain promising samples with only 3.2 or 0.4 MCMC iterations. 

In conclusion, for both analytic and neural targets, the neural samplers show a significant efficiency advantage over MCMC samplers with better (comparable) performances. This shows that neural samplers have potentially wider use on many sampling tasks for which high efficiency and low computational costs are demanded.

\subsection{Bayesian Regression}
\paragraph{Test accuracy compared with Stein sampler.}
In the previous section, we validate the sampler on low-dimensional 2D target distributions. The Bayesian logistic regression provides a source of real-world target distributions with medium dimensions (10-100). In this section, we compare our sampler with the most related sampler, Stein's sampler that is proposed in \citet{nss}. We compare the test accuracy of our proposed sampler against Stein's samplers and MCMC samplers. 

\paragraph{Experiment settings.} We follow the same setting as \citet{nss} on the Bayesian logistic regression tasks. The Covertype data set \citep{nss} has 54 features and 581,012 observations. It has been widely used as a benchmark for Bayesian inference. More precisely, following the \citet{nss}, we assume prior of the weights is $p(w|\alpha) = \mathcal{N}(w;~0, \alpha^{-1})$ and $p(\alpha) = \rm{Gamma}(\alpha;~ 1, 0.01)$. The data set is randomly split into the training set (80\%) and the testing set (20\%). Our goal is to train an implicit sampler that can sample from posterior distribution efficiently. The sampler trained with Fisher training is the same as the Fisher-Neural Sampler, which achieves the best test accuracy, as we have proved in previous sections. The sampler trained with the KL method gives the secondary test accuracy in Table \ref{Covertype}. This experiment shows that our proposed training method is capable of handling real-world Bayesian inference with a medium dimension. After training, the implicit sampler can result in the best accuracy while achieving hundreds of times faster than MCMC methods. For detailed settings, please refer to Appendix \ref{app:exp}.

\begin{table}[h]
\caption{Test Accuracies for Bayesian Logistic Regression on Covertype Dataset. The Fisher-NS is equivalent to a sampler with our proposed Fisher training, i.e. Fisher-IS.} 
\label{Covertype}
\vspace{-1mm}
\begin{center}
\begin{small}
\begin{sc}
\begin{tabular}{cc}
\toprule
SGLD & DSVI \\
75.09\% $\pm$ 0.20\% & 73.46\% $\pm$ 4.52\%  \\
\midrule
SVGD & SteinGAN \\
74.76\% $\pm$ 0.47\% & 75.37\% $\pm$ 0.19\% \\ 
\midrule
\textbf{Fisher-NS} & \textbf{KL-IS(Ours)} \\
76.22\% $\pm$ 0.43\% & 75.95\% $\pm$ 0.002\% \\
\bottomrule
\end{tabular}
\end{sc}
\end{small}
\end{center}
\vskip -0.1in
\end{table}

\begin{figure}
\centering
\includegraphics[width=1.0\linewidth]{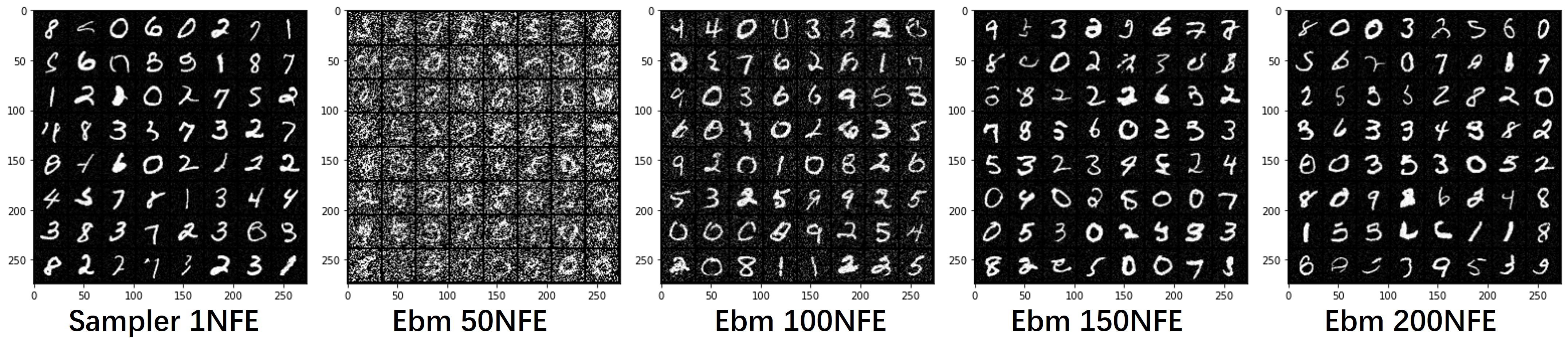}
\caption{Samples from implicit samplers and energy-based models trained on MNIST datasets. Our trained sampler only requires one NFE, which is comparable to EBM samples with 100 NFEs.}
\label{fig:mnist_ebm}
\end{figure}

\begin{figure}
\centering
\includegraphics[width=1.0\linewidth]{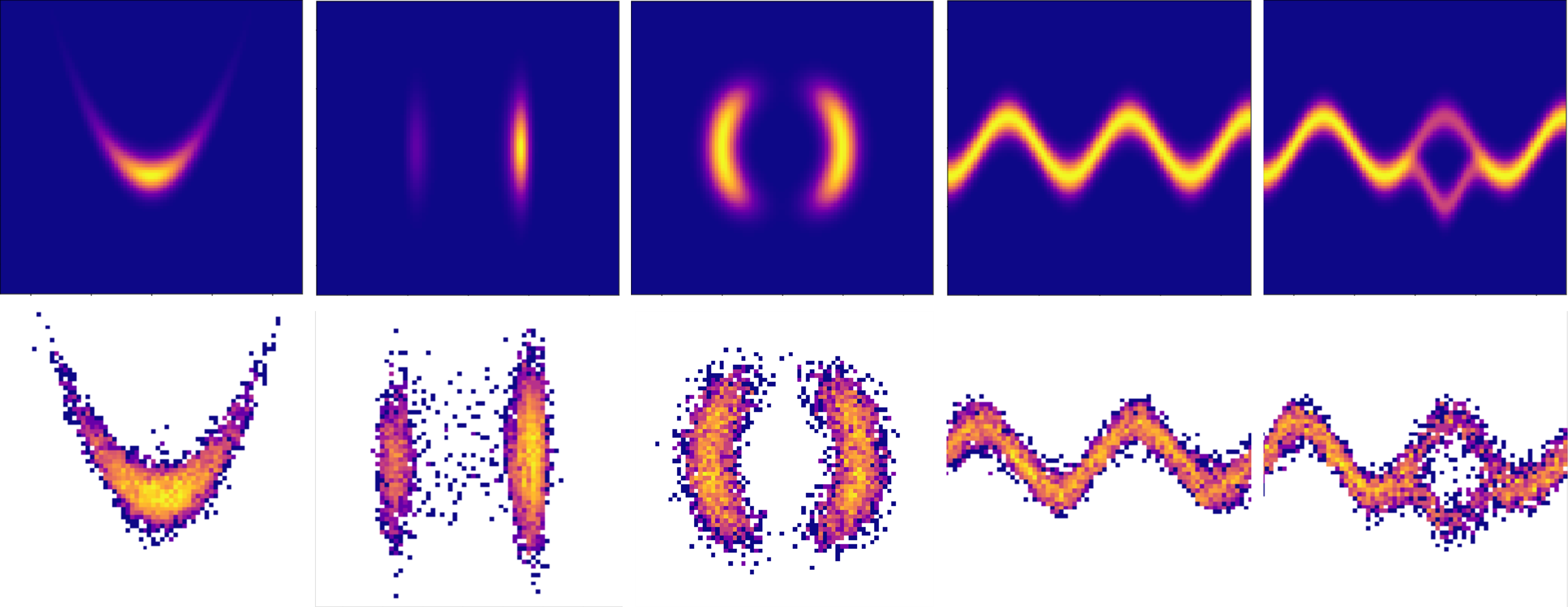}
\caption{Samples from implicit samplers trained with KL training method on five target distributions. \textbf{Up}: Visualization of target un-normalized density; \textbf{Below}: samples from our trained sampler.}
\label{fig:five_samples}
\vspace{-6mm}
\end{figure}

\subsection{Sampling from Energy-based Models}
\begin{table}
\centering
\vspace{-6mm}
\caption{Comparison of Sampling Efficiency and Performance.}
\vspace{-1mm}
\begin{center}
\begin{tabular}{l|cccccc}
\toprule
Model & NFE & Wall-clock Time & FLOPS & FID & KID\\
\midrule
EBM   & 250 & 0.8455 & 0.58G$\times$250 & 20.95 & 0.0097\\
EBM   & 200 & 0.6658 & 0.58G$\times$200 & \textbf{20.92} & 0.0111 \\
EBM   & 150 & 0.5115 & 0.58G$\times$150 & 21.32 & 0.0169 \\
EBM   & 100 & 0.3455 & 0.58G$\times$100 & 30.35 & 0.0742 \\
EBM   & 50 & 0.1692 & 0.58G$\times$50 & 52.55 & 0.2061 \\
\textbf{KL Sampler (ours)} & \textbf{1} & \textbf{0.0012} & \textbf{1.11G} & 22.29 & \textbf{0.0045} \\
\bottomrule
\end{tabular}
\end{center}
\label{tab:t5}
\vspace{-5mm}
\end{table}

A main advantage of an implicit sampler is its inference efficiency. This advantage benefits applications for which the default sampling method is inefficient, such as sampling from high-dimensional energy-based models. In this section, we use our proposed KL training method for an application of learning an efficient implicit sampler to sample from a pre-trained energy-based model. 

\paragraph{Experiment settings.}
An energy-based model (EBM) \citep{LeCun2006ATO} uses a neural network to parametrize a negative energy function, i.e. the logarithm of some un-normalized distribution. After training, obtaining samples from an EBM usually requires running an annealed Markov Chain Monte Carlo \citep{Xie2016ATO} which is computationally inefficient. In this section, we pre-train a deep EBM \citep{Li2019LearningEM} on the MNIST dataset and apply our proposed KL training method\footnote{The Fisher training method fails when learning to sample from EBM. Besides, the Fisher training method requires backpropagating through models multiple times, therefore is computationally inefficient.} for learning a neural implicit sampler to learn to sample from the pre-trained EBM efficiently. To take the inductive biases of image data, we parametrize our sampler via the stacking of several convolutional neural blocks. The neural sampler takes a random Gaussian input with a dimension of 128 and outputs a 32x32 tensor which has the same size as the resolution on which the EBM has been trained. We visualize the samples drawn from EBM with 50+ evaluations of the forward and backward pass of EBMs with annealed Langevin dynamics and samples from our sampler which only requires a single forward pass of the sampler's neural network. We train an MNIST classifier and compute the Frechet Inception Distance (FID) \citep{heusel2017gans} and the KID \citep{binkowski2018demystifying} as an evaluation metric. 

\paragraph{Performance.}
As is shown in Table \ref{tab:t5}, our sampler generates indistinguishable samples from the EBM but is about 100+ more efficient than annealed Langevin dynamics. To demonstrate the sampling efficiency of our sampler, we compare the efficiency and computational costs of EBM and our sampler, by listing some major metrics on computational performance in Table \ref{tab:t5}. The EBM has a total amount of $9.06M$ parameters and a base FLOPS of $0.58G$, while the sampler has $12.59M$ parameters and a base FLOPS of $1.11G$. Generating samples from EBM requires running MCMC chains, which consist of up to 200 neural functional evaluations (NFEs) of the EBM. We compare the efficiency and performance of EBM with different NFEs and find that with less than 100 NFEs the EBM can not generate realistic samples, while our sampler only needs one single NFE for good samples. This high-dimensional EBM sampling experiment demonstrates the scalability of KL training methods and their potential for accelerating large-scale energy-based model sampling.
 
\section{Limitations and Future Works}
\vspace{-0.2cm}
We have presented two novel approaches for training an implicit sampler to sampler from un-normalized density. We show in theory that our training methods are equivalent to minimizing the KL or the Fisher divergences. We systematically compare the proposed training methods against each other and against other samplers (NFs and MCMC). Besides, we conduct an experiment that trains a convolutional neural sampler to learn to sample from a pre-trained EBM. This experiment shows that our implicit sampler and corresponding training have great potential to improve the efficiency of modern machine learning models such as energy-based models or diffusion models. 

However, there are still limitations to our proposed methods. First, the score estimation is not computationally cheap, so developing a more efficient training algorithm by eliminating the score estimation phase is an important research direction. Besides, now the sampler is only limited to sampling problems. How to extend the methodology for other applications such as generative modeling is another interesting direction.  

\begin{ack}
\vspace{-0.1cm}
This work has been supported by the National Natural Science Foundation of China (No. 12271011) and the Beijing Natural Science Foundation (Z190001).
\end{ack}

\newpage
\bibliography{neurips_2023}
\bibliographystyle{IEEEtranN}


\newpage

\appendix
\section{Theory}
\subsection{Detailed Derivations of Vanishing Term of KL Training Method}\label{app:kl_train}
Recall the vanishing term 
\begin{align*}
    &\mathbb{E}_{\zz\sim p_z}\big[\frac{\partial \log p_\theta(\xx)}{\partial\theta}|_{\xx=g_\theta(\zz)} \big] \\
    &= \mathbb{E}_{\xx\sim p_\theta}\big[\frac{\partial\log p_\theta(\xx)}{\partial\theta} \big]
    = \int \frac{\partial p_\theta(\xx)}{\partial\theta}d\xx = \frac{\partial}{\partial\theta} \int p_\theta(\xx)d\xx
    = \frac{\partial}{\partial\theta} 1 = \mathbf{0}.
\end{align*}
The equality holds if $p_\theta(\xx)$ satisfies the conditions (1). $p_\theta(\xx)$ is Lebesgue integrable for $\xx$ with each $\theta$; (2). For almost all $\xx \in \mathbf{R}^D$, the partial derivative $\partial p_\theta(\xx)/\partial\theta$ exists for all $\theta\in \Theta$. (3) there exists an integrable function $g(.): \mathbf{R}^D \to \mathbf{R}$, such that $p_\theta(\xx) \leq g(\xx)$ for all $\xx$ in its domain. Then the derivative w.r.t $\theta$ can be exchanged with the integral over $\xx$, i.e. 
\begin{align*}
\int \frac{\partial}{\partial\theta}p_\theta(\xx)\diff \xx = \frac{\partial}{\partial \theta} \int p_\theta(\xx)\diff \xx.
\end{align*}

\subsection{Detailed Derivations of Fisher Training Methods}\label{app:fisher_train}
Recall the definition of the Fisher divergence, 
\begin{align}
\begin{aligned}
        \mathcal{D}^{(F)}(p_\theta, q) &\coloneqq \mathbb{E}_{p_\theta}\frac{1}{2}\|\nabla_{\xx} \log p_\theta(\xx) - \nabla_{\xx} \log q(\xx)\|^2_2 = \mathbb{E}_{\zz\sim p_z} \frac{1}{2}\|\bm{s}_\theta(g_\theta(\zz)) - \bm{s}_q(g_\theta(\zz))\|^2_2.
\end{aligned}
\end{align}
The score functions $\bm{s}_\theta$ and $\bm{s}_q$ remain the same meaning as in the \eqref{eqn:igm_kl2}. For an implicit sampler, we do not know the explicit expression of $\bm{s}_\theta$. In order to minimize the Fisher divergence, we take the $\theta$ gradient of \eqref{eqn:igm_fd1} and obtain
\begin{align}
\begin{aligned}
    \frac{\partial}{\partial\theta} \mathcal{D}^{(F)}(p_\theta, q)
    & = \mathbb{E}_{\xx=g_\theta(\zz),\atop \zz\sim p_z} \big[\bm{s}_\theta(\xx)- \bm{s}_q(\xx)\big] \big[ \frac{\partial \bm{s}_\theta(\xx)}{\partial \xx}-\frac{\partial \bm{s}_q(\xx)}{\partial \xx} \big] \frac{\partial \xx}{\partial\theta} \\
    & + \mathbb{E}_{\zz\sim p_z} \big[ \bm{s}_\theta(g_\theta(\zz))-\bm{s}_q(g_\theta(\zz)) \big] \frac{\partial \bm{s}_\theta(\xx)}{\partial\theta}|_{\xx=g_\theta(\zz)}\\
    &= \operatorname{Grad}^{(F,1)}(\theta) + \operatorname{Grad}^{(F,2)}(\theta).
\end{aligned}
\end{align}
We use $\bm{s}_{\theta^\dagger}$ to represent a function that does not differentiate with respect to parameter $\theta$, so the first term gradient \eqref{eqn:igm_fd_grad1} is equivalent to taking the gradient of an equivalent objective 
\begin{align}
        \mathcal{L}^{(F,1)}(\theta) = \mathbb{E}_{\zz\sim p_z} \frac{1}{2}\|\bm{s}_{\theta^\dagger}(g_\theta(\zz)) - \bm{s}_q(g_\theta(\zz)) \|_2^2.
\end{align}
As for the second term of \eqref{eqn:igm_fd_grad1}, we notice that under regularity conditions
\begin{align*}
    \frac{\partial \bm{s}_\theta(\xx)}{\partial\theta} = \frac{\partial}{\partial\theta}\frac{\partial}{\partial \xx} \log p_\theta(\xx) = \frac{\partial}{\partial \xx} \frac{\partial}{\partial \theta} \log p_\theta(\xx).
\end{align*}
So the second term of \eqref{eqn:igm_fd_grad1} equals to 
\begin{small}
\begin{align}
\begin{aligned}
    \operatorname{Grad}^{(F,2)}(\theta) =& \mathbb{E}_{\xx\sim p_\theta} \big[\bm{s}_\theta(\xx)- \bm{s}_q(\xx) \big]\frac{\partial}{\partial\theta}\frac{\partial}{\partial \xx} \log p_\theta(\xx) \\
    =& \mathbb{E}_{x\sim p_\theta} \big[\bm{s}_\theta(\xx)- \bm{s}_q(\xx) \big] \frac{\partial}{\partial x} \frac{\partial}{\partial \theta} \log p_\theta(\xx)\\
    =& \int \big[\bm{s}_\theta(\xx)- \bm{s}_q(\xx) \big]p_\theta(\xx) \frac{\partial}{\partial \xx} \frac{\partial}{\partial \theta} \log p_\theta(\xx) d\xx\\
    =& (\frac{\partial}{\partial \theta} \log p_\theta(\xx) [\bm{s}_\theta(\xx) - \bm{s}_g(\xx)])p_\theta(\xx)|_{\xx\in \Omega} - \int p_\theta(\xx)\big[(\nabla_{\xx} \log p_\theta(\xx))[\bm{s}_\theta(\xx) - \bm{s}_q(\xx)] \\
    &+ \nabla_{\xx} [\bm{s}_\theta(\xx) - \bm{s}_q(\xx)] \big] \frac{\partial}{\partial\theta} \log p_\theta(\xx)\\
    =& \mathbf{0} - \mathbb{E}_{p_\theta} \bigg[ \bm{s}_\theta^T(\xx)[\bm{s}_\theta(\xx) - \bm{s}_q(\xx)] + \nabla_{\xx} [\bm{s}_\theta(\xx) - \bm{s}_q(\xx)] \bigg] \frac{\partial}{\partial\theta}\log p_\theta(\xx)\\
    =&  -\int \bigg[ \bm{s}_\theta^T(\xx)[\bm{s}_\theta(\xx) - \bm{s}_q(\xx)] + \nabla_{\xx} [\bm{s}_\theta(\xx) - \bm{s}_q(\xx)] \bigg] \frac{\partial}{\partial\theta} p_\theta(\xx) \diff \xx\\
    =& -\frac{\partial}{\partial\theta} \int \bigg[ \bm{s}_{\theta^\dagger}^T(\xx)[\bm{s}_{\theta^\dagger}(\xx) - \bm{s}_q(\xx)] + \nabla_{\xx} [\bm{s}_{\theta^\dagger}(\xx) - \bm{s}_q(\xx)] \bigg] p_\theta(\xx)\\
    =& -\frac{\partial}{\partial\theta} \mathbb{E}_{p_\theta} \bigg[ \bm{s}_{\theta^\dagger}^T(\xx)[\bm{s}_{\theta^\dagger}(\xx) - \bm{s}_q(\xx)] + \nabla_{\xx} [\bm{s}_{\theta^\dagger}(\xx) - \bm{s}_q(\xx)] \bigg].
\end{aligned}
\end{align}
\end{small}
The notation $\xx\in \Omega$ means the boundary of distribution density of $p_\theta(\xx)$ and the value $p_\theta$ vanishes on the boundary. Usually, the integral over the boundary $\Omega$ of the support of $\xx$ vanishes if the density $p_\theta$ decays fast enough and the logarithm of density has finite parameter derivatives $\|\frac{\partial}{\partial\theta} \log p_\theta(\xx)\|_2$ for all $\theta\in \Theta$, then the first term vanishes. In practice, we find that the first term of the above expression does not bother the performance of Fisher training, however, for complex circumstances, the first term can not be guaranteed to be exactly 0. Besides, the above derivations are for the Fisher training, so our proposed KL training method is not bothered by them. 

\subsection{Additional Discussions on Proposed Algorithms}
The training algorithm \ref{alg:train_sampler} consists of two alternative phases for training implicit sampler: the score estimation phase and the generator update phase. The former phase uses score-matching-related techniques to update the score network to match the score function of the implicit distribution. The latter phase uses the score function of the target distribution and the learned score network to update the generator's parameter in order to minimize the KL divergence between the implicit distribution and the target distribution. 

\paragraph{Concerns on blindness and ill-landscape.}
Our proposed methods incorporate a score estimation phase, where a score neural network is used to estimate the score function of the sampler. However, two concerns arise regarding this score estimation step: the \emph{blindness} issue and the presence of an \emph{ill-landscape} in score estimation.

\paragraph{Blindness.}
The blindness \citep{wenliang20} is a pervasive practical issue of score-based methods that states that two multi-modal distributions could potentially have very small Fisher divergence even if they are highly dissimilar. For our proposed training method that minimizes the Fisher divergence, there exist concerns that blindness would cause the training to fail. However, our intensive study shows that, with the introduced annealing techniques that train samplers in a progressive way, the blindness issue does not occur.

\paragraph{Ill-landscape.} 
As is pointed out in \citet{Gong2021InterpretingDS}, the parameter landscape of the Fisher divergence to heavy-tail distribution can be ill-posed, having a unique minimizer but hard to be caught by gradient-based optimization algorithms. For our proposed Fisher training, we find that the annealing technique and suitable parametrization of the score network overcome the ill parameter landscape of training. For a straight demonstration, we train an implicit sampler to capture the 1D student-t distribution, a heavy-tail distribution that has been studied in \citet{Gong2021InterpretingDS} and find that the sampler is capable of learning such a heavy-tail distribution correctly.

\subsection{Details on Ill-Landscape}
\paragraph{Ill-landscape.} 
As is pointed out in \citet{Gong2021InterpretingDS}, the parameter landscape of the Fisher divergence to heavy-tail distribution can be ill-posed, having a unique minimizer but hard to be caught by gradient-based optimization algorithms. The student-t distribution, a heavy-tail distribution, is a representative distribution that has an ill landscape as is shown in \citet{Gong2021InterpretingDS}. Since our Fisher training aims to minimize the Fisher divergence, there is a concern about the ill landscape of the Fisher training. However, for both our proposed KL and Fisher training, we find that suitable parametrization of the score network overcomes the ill parameter landscape of training. For a straight demonstration, we train an implicit sampler to capture the 1D student-t distribution, a heavy-tail distribution that has been studied in \citet{Gong2021InterpretingDS} and finds that the sampler is capable of learning such a heavy-tail distribution correctly. More specifically, We plot a comparison of the student-$t$ distribution and our learned sampler with Fisher training in Figure \ref{fig:studentt}.

\begin{figure}[h]
\centering
\includegraphics[height=3cm,width=6cm]{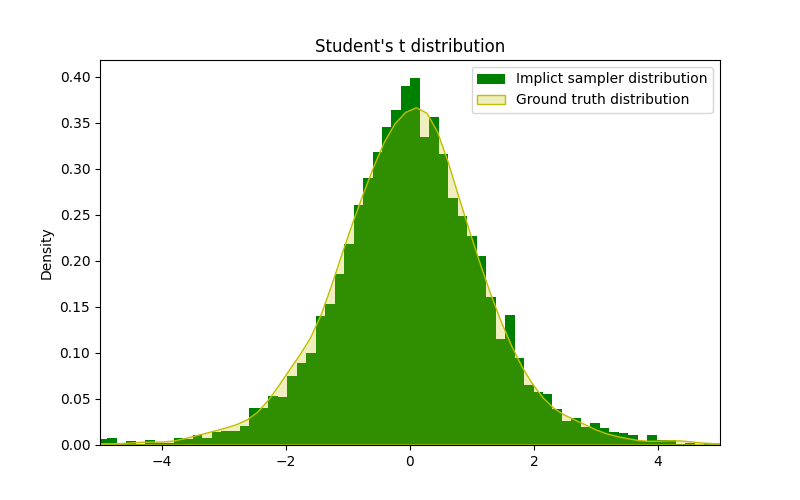}
\caption{Illustration of proposed Implicit Sampler on a heavy-tail student-t target distribution.}
\label{fig:studentt}
\end{figure}

\subsection{Proofs of Equivalence to Fisher-Stein Sampler}\label{app:connect}
We provide proof of Proposition 1 here.
\begin{proposition}\label{thm:fsd_equiv}
Estimating the sampler's score function $\bm{s}_\phi(.)$ with score matching is equivalent to the maximization of test function $\mathbf{f}$ of Fisher-Stein's Discrepancy objective. More specially, the optimal score estimation $S^*$ and Fisher-Stein's optimal test function $\mathbf{f}^*$ satisfy 
\[
\mathbf{f}^*(\xx) = \frac{1}{2\lambda}\big[\nabla_{\xx} \log q(\xx) - S^*(\xx) \big].
\]
\end{proposition}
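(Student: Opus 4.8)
The plan is to directly compare the first-order optimality conditions of the two optimization problems. Both the score-matching objective and the inner maximization of the Fisher-Stein discrepancy are quadratic functionals of an unknown function (the score network $\bm{s}_\phi$ in one case, the test function $\bm{f}_\eta$ in the other) evaluated against samples from the same distribution $p_\theta$. Since each functional is concave/convex in the unknown function, the optimizer is characterized pointwise by setting the functional derivative to zero, and it suffices to show these two pointwise conditions are related by the claimed affine map $\bm{f}^*(\xx) = \frac{1}{2\lambda}[\nabla_{\xx}\log q(\xx) - \bm{s}^*(\xx)]$.

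First I would write out the score-matching objective in its original (non-integrated-by-parts) form: minimizing $\mathcal{D}_{FD}(p_\theta, q_\phi)$ over $\bm{s}_\phi$ is, up to a $\phi$-independent constant, $\mathbb{E}_{\xx\sim p_\theta}\|\bm{s}_\phi(\xx) - \bm{s}_{p_\theta}(\xx)\|_2^2$, so its unconstrained optimum over all functions is $\bm{s}^*(\xx) = \bm{s}_{p_\theta}(\xx) = \nabla_{\xx}\log p_\theta(\xx)$ at every $\xx$ in the support of $p_\theta$. Then I would take the Fisher-Stein inner objective $L(\theta,\eta) = \mathbb{E}_{p_\theta}\langle\nabla_{\xx}\log q(\xx),\bm{f}(\xx)\rangle + \sum_d \partial_{\xx_d} f_d(\xx) - \lambda\|\bm{f}(\xx)\|_2^2$ and apply Stein's identity (stated in Section~\ref{sec:score_estimate}) to rewrite $\mathbb{E}_{p_\theta}\sum_d \partial_{\xx_d} f_d(\xx) = -\mathbb{E}_{p_\theta}\langle\nabla_{\xx}\log p_\theta(\xx),\bm{f}(\xx)\rangle$. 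This turns the Fisher-Stein objective into $\mathbb{E}_{p_\theta}[\langle \bm{s}_q(\xx) - \bm{s}_{p_\theta}(\xx),\bm{f}(\xx)\rangle - \lambda\|\bm{f}(\xx)\|_2^2]$, whose pointwise maximizer (complete the square, or differentiate in $\bm{f}(\xx)$) is $\bm{f}^*(\xx) = \frac{1}{2\lambda}[\bm{s}_q(\xx) - \bm{s}_{p_\theta}(\xx)]$. Substituting $\bm{s}_{p_\theta} = \bm{s}^*$ gives exactly the claimed relation.

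The main obstacle I anticipate is handling the function-class restrictions and regularity conditions carefully rather than the algebra. The argument as sketched treats both optimizations as unconstrained over all (smooth, integrable) vector fields; to make the pointwise characterization legitimate I need the same integrability/decay assumptions on $p_\theta$ that justify Stein's identity (boundary terms at infinity vanishing), and I should note that the identity $\bm{f}^* = \frac{1}{2\lambda}(\bm{s}_q - \bm{s}^*)$ holds $p_\theta$-almost everywhere, which is all that matters since both objectives are expectations under $p_\theta$. A secondary subtlety is that in practice $\mathcal{F}$ and the score network are restricted parametric families, so $\bm{s}^*$ and $\bm{f}^*$ are the best approximants within those families; I would remark that as long as the two families are related by the same affine reparametrization $\bm{f} \leftrightarrow \frac{1}{2\lambda}(\bm{s}_q - \bm{s})$, the equivalence of the optimization problems is exact, and otherwise it is the equivalence of the idealized (nonparametric) optima. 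With these caveats stated, the proof is essentially the two completions-of-square plus one application of Stein's identity.
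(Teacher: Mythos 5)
Your proposal is correct and reaches the same first-order condition as the paper: the paper derives $\mathbf{f}^*(\xx) = \frac{1}{2\lambda}[\nabla_{\xx}\log q(\xx) - \nabla_{\xx}\log p(\xx)]$ via the Euler--Lagrange equation for the functional $\int l(\xx,\mathbf{f},\nabla\mathbf{f})\,\diff\xx$, and then identifies the score-matching optimum $S^* = \nabla_{\xx}\log p$, exactly as you do. Your packaging of the integration by parts as Stein's identity followed by a pointwise completion of the square is mathematically the same step as the paper's $-\frac{\diff}{\diff\xx}(\partial l/\partial\mathbf{f}')$ term, so this is essentially the paper's argument in a slightly cleaner form.
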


\begin{proof}
With fixed $p$ and known target $q$, the optimal test function $\mathbf{f}^*$ has representation 
\begin{align*}
    &\mathbf{f}^* = \arg\min_{\mathbf{f}} \mathcal{L}(\mathbf{f})
\end{align*}
Where functional $\mathcal{L}(\mathbf{f})$ has integral representation
\begin{align*}
    \mathcal{L}(f) =& \mathbb{E}_{x\sim p}\biggl\{\langle \nabla_{\xx}\log q(\xx),\mathbf{f}(\xx) \rangle + \langle\nabla_{\xx}, \mathbf{f}(\xx) \rangle - \lambda [\mathbf{f}^T(\xx)\mathbf{f}(\xx)] \biggl\}\\
    =&\int p(\xx)\langle \nabla_{\xx}\log q(\xx),\mathbf{f}(\xx) \rangle + p(\xx)\langle\nabla_{\xx}, \mathbf{f}(\xx) \rangle - \lambda p(\xx)[\mathbf{f}^T(\xx)\mathbf{f}(\xx)] \diff \xx\\
    =& \int l(\xx,\mathbf{f},\nabla\mathbf{f})\diff \xx.
\end{align*}
Here $l(\xx,\mathbf{f},\nabla \mathbf{f}) = \int p(\xx)\langle \nabla_{\xx}\log q(\xx),\mathbf{f}(\xx) \rangle + p(\xx)\langle\nabla_{\xx}, \mathbf{f}(\xx) \rangle -\lambda p(\xx)[\mathbf{f}^T(\xx)\mathbf{f}(\xx)]$. By Euler-Lagrange equation, the optimal function $\mathbf{f}$ satisfies
\begin{align*}
    \frac{\partial l}{\partial \mathbf{f}} -\frac{\diff}{\diff \xx} (\frac{\partial l}{\partial\mathbf{f}'}) + \frac{\partial^2}{\partial x^2}(\frac{\partial l}{\partial \mathbf{f}''}) = 0.
\end{align*}
By calculation, we have 
\begin{align*}
    &\frac{\partial l}{\partial \mathbf{f}}(\xx) = p(\xx)\nabla \log q(\xx) - 2\lambda p(\xx)\mathbf{f}(\xx)\\
    &\frac{\diff}{\diff \xx} (\frac{\partial l}{\partial\mathbf{f}'})(\xx) = \nabla_{\xx} p(\xx)\\
    & \frac{\partial l}{\partial \mathbf{f}''}(\xx) = 0.
\end{align*}
So the optimal $\mathbf{f}^*$ satisfies the Euler-Lagrange equation as
\begin{align*}
    p(\xx)\nabla_{\xx} \log q(\xx) - 2\lambda p(\xx)\mathbf{f}(\xx) -\nabla_{\xx} p(\xx)=0.
\end{align*}
Divide the both side with $p(\xx)$ and note that $\nabla_{\xx} p(\xx)/p(\xx) = \nabla_{\xx} \log p(\xx)$, the equation turns to 
\begin{align*}
    \mathbf{f}^*(\xx) = \frac{1}{2\lambda}\big[\nabla_{\xx} \log q(\xx) - \nabla_{\xx} \log p(\xx) \big].
\end{align*}
Next, consider optimal $S^*$. The $S^*$ is obtained by minimizing the Score Matching objective, which is equivalent to minimizing the Fisher divergence between $p$ and $S$ induced family, thus the optimal $S^*(\xx) = \nabla_{\xx} \log p(\xx)$. Substitute $\nabla_{\xx} \log p(\xx)$ with $S^*$ into $f^*$ formula, we have
\begin{align*}
    \mathbf{f}^*(\xx) = \frac{1}{2\lambda}\big[\nabla_{\xx} \log q(\xx) - S^*(\xx) \big].    
\end{align*}
\end{proof}

Next, we prove Theorem 2. Recall the definition of Fisher-Stein objective
\begin{align}\label{eqn:fisher_stein_obj}
    &\min_\theta \max_\eta L(\theta,\eta)\\ 
    &L(\theta,\eta) = \mathbb{E}_{p_\theta} \langle\nabla_{\xx} \log q(\xx),\mathbf{f}_\eta(\xx)\rangle + \sum_{d=1}^D \frac{\partial}{\partial x_d} f_d(\xx) - \lambda \|\mathbf{f}_\eta(\xx)\|_2^2 \biggl.
\end{align}

\begin{theorem*}
Assume $\mathbf{f}^*$ is the optimal test function that maximizes Fisher-Stein's objective \ref{eqn:fisher_stein_obj}. Then the minimization part of objective \ref{eqn:fisher_stein_obj} for sampler $G_\theta$ with the gradient-based algorithm is equivalent to the Fisher training, i.e. the minimization objective of \ref{eqn:fisher_stein_obj} shares the same gradient with $\operatorname{Grad}^{(F)}(\theta)$.
\end{theorem*}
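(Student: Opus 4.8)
The plan is to substitute the optimal test function $\mathbf{f}^*$ from Proposition \ref{thm:fsd_equiv} into the minimization objective $L(\theta,\eta)$ of \eqref{eqn:fisher_stein_obj} and show the resulting $\theta$-gradient coincides with $\operatorname{Grad}^{(F)}(\theta) = \frac{\partial}{\partial\theta}\mathcal{L}^{(F)}(\theta)$ defined in \eqref{eqn:fisher_grad}. First I would fix $\eta$ at its optimum, so that $\mathbf{f}_\eta(\xx) = \mathbf{f}^*(\xx) = \frac{1}{2\lambda}\big[\bm{s}_q(\xx) - \bm{s}_p(\xx)\big]$ where $\bm{s}_p = \nabla_{\xx}\log p_\theta(\xx)$; crucially, when we take the gradient with respect to $\theta$ of the minimization part, the envelope theorem tells us we may treat $\mathbf{f}^*$ as not depending on $\theta$ through the maximization (the $\eta$-derivative of $L$ vanishes at the optimum), but we must still retain the dependence of $\mathbf{f}^*$ on $\theta$ through $\bm{s}_p$ as it appears explicitly in the objective — this bookkeeping of which $\theta$-dependencies are ``live'' is exactly the kind of subtlety that produces the two-term structure $\operatorname{Grad}^{(F,1)} + \operatorname{Grad}^{(F,2)}$.

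Next I would plug $\mathbf{f}^*$ into $L(\theta,\eta^*)$ term by term. The first two terms give $\mathbb{E}_{p_\theta}\big[\langle \bm{s}_q, \mathbf{f}^*\rangle + \nabla_{\xx}\cdot\mathbf{f}^*\big] = \frac{1}{2\lambda}\mathbb{E}_{p_\theta}\big[\langle \bm{s}_q, \bm{s}_q - \bm{s}_p\rangle + \nabla_{\xx}\cdot(\bm{s}_q - \bm{s}_p)\big]$, and the regularizer gives $-\lambda\mathbb{E}_{p_\theta}\|\mathbf{f}^*\|_2^2 = -\frac{1}{4\lambda}\mathbb{E}_{p_\theta}\|\bm{s}_q - \bm{s}_p\|_2^2$. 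Expanding the square and combining, the quadratic cross-terms $\langle \bm{s}_q, \bm{s}_p\rangle$ should partially cancel, leaving (up to the constant $\frac{1}{4\lambda}$) an expression of the form $\mathbb{E}_{p_\theta}\big[\|\bm{s}_q\|_2^2 - \|\bm{s}_p\|_2^2 + 2\nabla_{\xx}\cdot(\bm{s}_q - \bm{s}_p)\big]$, which is precisely $\frac{2}{\lambda}\mathcal{L}^{(F)}(\theta)$ as written in \eqref{eqn:igm_fd_final} (with $\bm{s}_{\theta^\dagger}$ playing the role of $\bm{s}_p$ with its parameter dependence frozen where the derivation in Section \ref{sec:3_1} froze it). The key algebraic identity to verify is that $\langle \bm{s}_q, \bm{s}_q - \bm{s}_p\rangle - \tfrac14\cdot 2\|\bm{s}_q - \bm{s}_p\|^2$-type combination reduces correctly; this is a short computation but must be done carefully to match signs.

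Then I would take $\frac{\partial}{\partial\theta}$ of both sides. On the Fisher-Stein side, the $\theta$-dependence enters through (i) the sampling distribution $p_\theta$, i.e. $\xx = g_\theta(\zz)$, and (ii) the score $\bm{s}_p$ appearing inside the integrand; these two channels reproduce exactly $\operatorname{Grad}^{(F,1)}(\theta)$ and $\operatorname{Grad}^{(F,2)}(\theta)$ from \eqref{eqn:igm_fd_grad1} — the integration-by-parts manipulation in the Appendix derivation of $\operatorname{Grad}^{(F,2)}$ is what converts channel (ii) into the $\nabla_{\xx}\cdot\bm{s}_p$ and $\|\bm{s}_p\|^2$ terms, and it is reassuring that the same manipulation is forced here. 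I would invoke the envelope theorem to dispose of the $\eta$-channel: since $\eta^*$ maximizes $L$, $\partial_\eta L|_{\eta^*}=0$, so $\frac{d}{d\theta}L(\theta,\eta^*(\theta)) = \partial_\theta L(\theta,\eta)|_{\eta=\eta^*}$, justifying that we only differentiate the explicit $\theta$-occurrences. Finally, since $\mathcal{L}^{(F)}$ and the substituted $L$ differ only by the multiplicative constant $\frac{\lambda}{2}$ and an additive constant independent of $\theta$ (the $\mathbb{E}_{p_\theta}\|\bm{s}_q\|^2$ term is not constant, but it is common to both — actually it appears identically, so it contributes the same gradient to both sides and need not be dropped), the gradients agree up to a positive scalar, which is the claimed equivalence ``shares the same gradient'' in the sense of defining the same descent direction.

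The main obstacle I anticipate is the careful treatment of the stop-gradient operator $\bm{s}_{\theta^\dagger}$ and matching it to the envelope-theorem argument: I must argue that freezing $\bm{s}_p$'s parameter dependence in the ``equivalent objective'' sense of Section \ref{sec:3_1} is the same operation as the score network being at its score-matching optimum with the sampler held fixed during its own update (as in Algorithm \ref{alg:train_sampler}), and similarly that Fisher-Stein's test function is at its max with the sampler frozen. In other words, the delicate point is not the algebra but establishing that ``optimal $\mathbf{f}$ / optimal $\bm{s}_\phi$ computed with $\theta$ fixed'' is exactly the right notion to make Proposition \ref{thm:fsd_equiv} applicable inside the gradient computation, and that the leftover boundary term flagged in the Appendix derivation of $\operatorname{Grad}^{(F,2)}$ (which vanishes under the stated decay conditions on $p_\theta$) is the only gap — so the equivalence holds under the same regularity hypotheses already assumed for the Fisher training derivation.
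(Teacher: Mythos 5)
Your proposal is correct and follows essentially the same route as the paper's proof: substitute the optimal test function $\mathbf{f}^*(\xx)=\frac{1}{2\lambda}\big[\bm{s}_q(\xx)-\nabla_{\xx}\log p_\theta(\xx)\big]$ from Proposition~\ref{thm:fsd_equiv} into the Stein objective, expand, and identify the result with the Fisher training objective and gradient --- the only cosmetic difference is that the paper first invokes Stein's identity to rewrite the divergence term and then matches the two-term gradient of \eqref{eqn:igm_fd_grad1}, whereas you expand directly to the loss form of \eqref{eqn:igm_fd_final}, the two being connected by exactly the integration-by-parts step you flag. One harmless slip: the substituted objective equals $\frac{1}{2\lambda}\mathcal{L}^{(F)}(\theta)$ (indeed it is exactly $\frac{1}{2\lambda}\mathcal{D}^{(F)}(p_\theta,q)$), not $\frac{2}{\lambda}\mathcal{L}^{(F)}(\theta)$, which does not affect the claimed equivalence up to a positive scalar.
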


\begin{proof}
    Notice that by Proposition \ref{thm:fsd_equiv}, the maximization of \eqref{eqn:fisher_stein_obj} to get an optimal $\mathbf{f}^*$ is equivalent to learning a score function $S^*$ with the score-matching related techniques, i.e. 
    \begin{align}\label{eqn:f_and_S}
        & \mathbf{f}^*(\xx) = \frac{1}{2\lambda}\big[\nabla_{\xx} \log q(\xx) - S^*(\xx) \big],\\
        & S^*(\xx) = \bm{s}_\theta(\xx), \forall x
    \end{align}
    Here $\bm{s}_\theta(\xx)$ denotes the unknown score function of the implicit sampler's distribution. Here we use Stein's identity which states
    \begin{align}\label{eqn:stein_id}
        \mathbb{E}_{p_\theta} \sum_{d=1}^D \frac{\partial}{\partial x_d} f_d(\xx) = -\mathbb{E}_{p_\theta} \langle \bm{s}_\theta(\xx), \mathbf{f}(\xx) \rangle
    \end{align}
    Put the \eqref{eqn:stein_id} and $\mathbf{f}^*$ in objective \ref{eqn:fisher_stein_obj} with $S^*$ according to the relation \ref{eqn:f_and_S}, we obtain
    \begin{align}
        &\min_\theta L(\theta)\\ 
        & L(\theta) = \mathbb{E}_{p_\theta} \langle \nabla_{\xx} \log q(\xx),\mathbf{f}^*(\xx) \rangle - \langle \bm{s}_\theta,\mathbf{f}^*(\xx) \rangle - \lambda \|\mathbf{f}^*(\xx)\|_2^2 \\
        & = \mathbb{E}_{p_\theta} \langle \nabla_{\xx} \log q(\xx)-\bm{s}_\theta,\mathbf{f}^*(\xx) \rangle - \lambda \|\mathbf{f}^*(\xx)\|_2^2\\
        & = \mathbb{E}_{p_\theta} \langle \nabla_{\xx} \log q(\xx)- S^*(\xx) + S^*(\xx) - \bm{s}_\theta(\xx),\mathbf{f}^*(\xx) \rangle - \lambda \|\mathbf{f}^*(\xx)\|_2^2\\
        & = \mathbb{E}_{p_\theta} \langle \nabla_{\xx} \log q(\xx)- S^*(\xx),\mathbf{f}^*(\xx) \rangle  + \langle S^*(\xx) - \bm{s}_\theta(\xx),\mathbf{f}^*(\xx) \rangle - \lambda \|\mathbf{f}^*(\xx)\|_2^2\\
        & = \mathbb{E}_{p_\theta} 2\lambda \|\mathbf{f}^*\|_2^2 + \langle S^*(\xx) - \bm{s}_\theta(\xx),\mathbf{f}^*(\xx) \rangle - \lambda \|\mathbf{f}^*(\xx)\|_2^2\\
        & = \mathbb{E}_{p_\theta} \lambda \|\mathbf{f}^*\|_2^2 + \langle S^*(\xx) - \bm{s}_\theta(\xx),\mathbf{f}^*(\xx) \rangle      
    \end{align}
    Taking the $\theta$ gradient of the above objective, we have
    \begin{align}
        \frac{\partial}{\partial\theta} L(\theta) & = \lambda  \frac{\partial}{\partial\theta}\mathbb{E}_{p_\theta} \|\mathbf{f}^*(\xx)\|_2^2 + \mathbb{E}_{p_\theta} \mathbf{f}^*(\xx)^T\cdot \frac{\partial \bm{s}_\theta(\xx)}{\partial \theta}\\
        &= \frac{1}{4\lambda} \frac{\partial}{\partial\theta}\mathbb{E}_{p_\theta} \|\nabla_{\xx} \log q(\xx) - S^*(\xx)\|_2^2 - \frac{1}{2\lambda} \mathbb{E}_{p_\theta} (\nabla_{\xx} \log q(\xx)- S^*(\xx))^T\cdot \frac{\partial \bm{s}_\theta(\xx)}{\partial \theta}\\
        &= \frac{1}{4\lambda} \frac{\partial}{\partial\theta}\mathbb{E}_{p_\theta} \|S^*(\xx) - \nabla_{\xx} \log q(\xx)\|_2^2 + \frac{1}{2\lambda} \mathbb{E}_{p_\theta} (S^*(\xx) - \nabla_{\xx} \log q(\xx))^T\cdot \frac{\partial \bm{s}_\theta(\xx)}{\partial \theta}.
    \end{align}
    In the above equation, some gradient terms vanish because $S^*(\xx) = \bm{s}_\theta(\xx)$ for all $x$. Since the $p_\theta$ is implemented with the implicit sampler, i.e. $x\sim p_\theta$ means $x=G_\theta(\zz), z\sim p_z(\zz)$. So above gradient term can be further simplified as 
    \begin{align}
        &\frac{1}{4\lambda} \frac{\partial}{\partial\theta}\mathbb{E}_{p_\theta} \|S^*(\xx) - \nabla_{\xx} \log q(\xx)\|_2^2 + \frac{1}{2\lambda} \mathbb{E}_{p_\theta} (S^*(\xx) - \nabla_{\xx} \log q(\xx))^T\cdot \frac{\partial \bm{s}_\theta(\xx)}{\partial \theta}\\
        &= \frac{1}{4\lambda} \frac{\partial}{\partial\theta}\mathbb{E}_{z\sim p_z} \|S^*(G_\theta(\zz)) - \nabla_{\xx} \log q(G_\theta(\zz))\|_2^2 + \frac{1}{2\lambda} \mathbb{E}_{p_\theta} (S^*(\xx) - \nabla_{\xx} \log q(\xx))^T\cdot \frac{\partial \bm{s}_\theta(\xx)}{\partial \theta}\\
        & = \frac{1}{2\lambda} \mathbb{E}_{x=G_\theta(\zz)\atop z\sim p_z} \bigg[ S^*(\xx) - \bm{s}_q(\xx)\bigg]\cdot \bigg[ \frac{\partial S^*(\xx)}{\partial x} - \frac{\partial \bm{s}_q(\xx)}{x} \bigg]\cdot \frac{\partial x}{\partial\theta} \\
        & + \frac{1}{2\lambda} \mathbb{E}_{p_\theta} (S^*(\xx) - \nabla_{\xx} \log q(\xx))^T\cdot \frac{\partial \bm{s}_\theta(\xx)}{\partial \theta}.
    \end{align}
    Here $\bm{s}_q(\xx)\coloneqq \nabla_{\xx} \log q(\xx)$. So the $\theta$ gradient is the same as the Fisher training (equation (5)) in the paper. 
\end{proof}

\section{Experiments}\label{app:exp}

\subsection{Experiment Details on 2D Synthetic Sampling}
\paragraph{Model architectures.}
For toy 2-dimensional data experiments, we use a 4-layer MLP neural network with 200 hidden units in each layer as the sampler. The activation of the sampler is chosen as LeakyReLU non-linearity with a 0.2 coefficient. The score network is a 4-layer MLP with 200 hidden units in each layer. The activation of the score network is GELU non-linearity. 

\paragraph{Evaluation metrics.}
We calculate the Kernelized Stein Discrepancy with multi-scale RBM kernel as is implemented in codebase \footnote{\url{https://github.com/WenboGong/Sliced_Kernelized_Stein_Discrepancy.git}}. We set the bandwidth of the kernel to be 0.25.

\paragraph{Comparison of KL and Fisher training methods.}
In this work, we propose the KL training method and the Fisher training method for training neural implicit samplers. Both methods require two phases which update a neural score network to match the score function of implicit distribution and update the implicit sampler's parameter in order to minimize either the KL divergence or the Fisher divergence. To this end, we give some comparisons of the proposed two training methods. Empirically, we find that the KL training methods work better than Fisher training under the KSD for 2D targets. We also observe that the Fisher training tends to learn an implicit sampler that collapses on certain modes of the target distribution. Besides, since the Fisher training requires calculating the data derivative of the score function, i.e. $\nabla_{\xx} \big[ \bm{s}_q(\xx)-\bm{s}_{\theta^\dagger}(\xx) \big]$, thus requires the score function to be differentiable. The non-differentiable activate functions such as ReLU and LeakyReLU are not suitable for implementing Fisher training. However, the KL training does not suffer from such issues. Besides, the computational costs of Fisher training are also more expensive than KL training for each iteration. 


\subsection{Experiment Details on Bayesian Regression}
\paragraph{Experiment settings.}
We use the same setting as \citep{nss}, where the prior of the weights is $p(w \mid \alpha)=\mathcal{N}\left(w ; 0, \alpha^{-1}\right)$ and $p(\alpha)=\operatorname{Gamma}(\alpha ; 1,0.01)$. The Covertype dataset \citep{Blackard1999ComparativeAO} (581,012 samples with 54 features) is randomly split into the training set (80\%) and testing set (20\%).  Our methods are compared with Stein GAN, SVGD, SGLD, doubly stochastic variational inference (DSVI) \citep{Titsias2014DoublySV}, KSD-NS, and Fisher-NS \citep{nss} on this data set. For SGLD, DSVI, and SVGD, the model is trained with 3 epochs of the training set (about 15k iterations), while for neural network-based
methods (Fisher-NS, KSD-NS, and Stein GAN), the training is run until convergence. Since we have shown the equivalence of Fisher-NS \citep{nss}, we only train the implicit sampler with the KL training method. We set the training learning rate for both the score network and the sampler to be $0.0002$. The target distribution (posterior distribution) is computed with 500 random data samples. We set the number of score estimation phases as 2. We evaluate the test accuracy of the logistic regression model with 100 samples from the trained sampler. 

We use the same settings for Bayesian inference as in \citet{nss}. The neural samplers are implemented via a 4-layer MLP with 1024 hidden units in each layer and GELU activation functions. The output dimension of the sampler is 55 and the input dimension is set to be 55x10 = 550, following the same setting as in [1]. The score network has the same neural architecture as the sampler, but the input dimension is set to 55 which matches the output dimension. We use Adam optimizers with a learning rate of 0.0002 and default beta values for both sampler and score networks. We use a batch size of 100 for training the sampler and use 2 updates for score network for each update of the sampler network. We use the standard score matching for learning the score network. We train the sampler for 10k iterations for each repeat and use 30 independent repeats to compute the mean and std of the test accuracy. The learning rate of SGLD is chosen to be $0.1/(t+1)^{0.55}$ as suggested in \citet{welling2011bayesian}, and the average of the last 100 points is used for evaluation. For DSVI, the learning rate is $1e-07$, and 100 iterations are used for each stage. For SVGD, we use RBF kernel with bandwidth h calculated by the \emph{median trick} as in \citet{liu2016stein}, and 100 particles are used for evaluation with step size being 0.05.

\paragraph{Model architectures.}
We use 4-layer MLP neural networks as the sampler and score network, respectively. The activation of the sampler is chosen as Gaussian Error Linear Units (GELU) \citep{Hendrycks2016GaussianEL} non-linearity for both the score network and the sampler. We set the hidden dimension to 1024. The input dimension of the sampler is set to 128.

\begin{figure}
\centering
\includegraphics[width=0.8\linewidth]{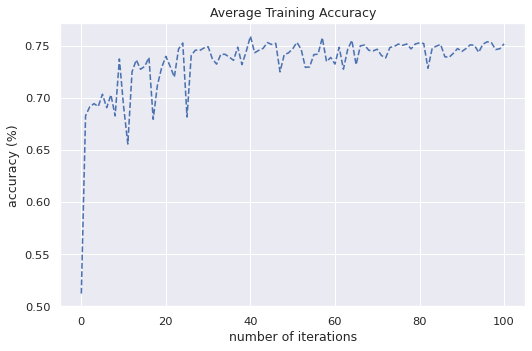}
\caption{Test accuracy of KL sampler with the different number of iterations.}
\label{fig:bayes_acc}
\end{figure}

\subsection{Experiment Details on Sampling from EBMs}
\paragraph{Datasets and model architectures.} We pre-train a deep (multi-scale) EBM \citep{Li2019LearningEM} denoted as $E_d(.)$ with 12 residual layers \citep{He2015DeepRL} as is used in DeepEBM \footnote{https://github.com/zengyi-li/MDSM}. We set the starting noise level and ending noise level to be $\sigma_{min}=0.3$ and $\sigma_{max}=3.0$. The energy for different noises is parametrized with $E_\sigma(\xx) = f(\xx)/\sigma$. The training learning rate is set to 0.001. We pre-train the EBM for 200k iterations. Samples from deep EBM are obtained by simulating annealed Langevin dynamics algorithm that is the same as the one used in \citet{Li2019LearningEM}. Our implicit sampler is also a neural network. The network consists of four inverse convolutional layers which hidden dimensions to be 1024, 512, 256, and 3. For each inverse convolutional layer, we use a 2D BatchNormalization and LeakyReLU non-linearity with a leak hyper-parameter to be 0.2. The prior distribution of the implicit generator is the standard Multi-variate Gaussian distribution. The score network follows a UNet architecture that is adapted from the codebase \footnote{\url{https://github.com/huggingface/notebooks/blob/main/examples/annotated_diffusion.ipynb}}. In order to match the design of the multi-scale EBM, we parametrize the score function in the same way as $S_\sigma(\xx) \coloneqq S(\xx)/\sigma$. 

\paragraph{Training details.} We pre-train deep EBM on the MNIST dataset in the house. Then the pre-trained EBM is viewed as a multi-scale un-normalized target distribution. We then use the KL training method to update the score network and the generator in order to let the generator match the target distribution. We initialize the generator and the score network randomly. For each training iteration, we randomly select a noise $\sigma\in [\sigma_{min}, \sigma_{max}]$. We then generate a sample from the generator and add a Gaussian noise with variance $\sigma^2$. Then we update the score network with generated samples according to denoising score matching. Then we generate a batch of samples and add Gaussian noise with the same variance from the generator and use the gradient estimation in Algorithm \ref{alg:train_sampler} to update the generator's parameter. We use the Adam optimizer with a learning rate of 0.0001 for both the score network and the generator. For the score network, we set the hyper-parameter of the optimizer to be $\beta_0 = 0.9$ and $\beta_1 = 0.99$. For the generator, we set $\beta_0 = 0$ and $\beta_1 = 0.99$.

\paragraph{Evaluation metrics.} To qualitatively evaluate the quality of generated samples, we adapt the definition of the Frechet Inception Distance (FID) \citep{heusel2017gans}. We pre-train an MNIST image classifier with architecture the WideResNet \citep{zagoruyko2016wide} model with a depth of 16 and widen factor of 8. We then calculate the Wasserstein distance in the feature space of the pre-trained classifier with the same method as the FID. 

\paragraph{Additional discussion.} 
In this experiment, we use the KL training method to train a generator to sample from pre-trained multi-scale EBM. We also try the Fisher training method. But the Fisher training fails to train implicit samplers to sample from EBMs. One of the reasons is that the Fisher training requires calculating the trace of the Jacobian matrix of the score network. For high-dimensional data such as image data, such trace term is very expensive. We also try the stochastic trace estimation method such as the Skilling-Hutchison trace estimation technique \citep{Hutchinson1989ASE}. But for the Fisher training, we find that stochastic trace estimation does not make the Fisher training work for high-dimensional samplers. 


\end{document}